\theoremstyle{plain} 
\newtheorem{theorem}{Theorem}[section]
\newtheorem{corollary}{Corollary}[theorem]
\newtheorem{lemma}[theorem]{Lemma}
\newtheorem{remark}[theorem]{Remark}
\newtheorem{definition}{Definition}[section]
\newenvironment{rcases}
  {\left.\begin{aligned}}
  {\end{aligned}\right\rbrace}
\newcommand{\norm}[1]{\left\lVert#1\right\rVert}
\newcommand{\D}[2]{\operatorname{D}_{#1}{\hspace{-.7mm}#2}}
\renewcommand{\D}[2]{\frac{\operatorname{d} #2}{\operatorname{d} #1}}
\newcommand*\bigcdot{\mathpalette\bigcdot@{.5}}
\newcommand*\bigcdot@[2]{\mathbin{\vcenter{\hbox{\scalebox{#2}{$\m@th#1\bullet$}}}}}
\def\ph{{\hspace{2pt}\bigcdot\hspace{2pt}}} 
\def\xj{{x_{j}}}
\def\zj{{z_{j}}}
\def\zetaj{{\zeta_{j}}}
\def\gvar{{q}}
\def\hh{{h}}
\def\thetaj{{\theta_{j}}}
\def\L{\mathcal{L}}
\def\R{\mathcal{R}}
\def\P{\mathcal{P}}
\def\X{\mathcal{X}}
\def\Y{\mathcal{Y}}
\def\zetaj{\zeta_{j}}
\def\gradient{{\partial_{x_0}x_L}}
\def\transp{T}
\def\wup{\Box}
\def\ith{{\langle i \rangle}}
\newcommand{\Delete}[1]{\mathbf{delete \hphantom{.}} #1}
\def\globalscale{.5}
\icmltitlerunning{A Closer Look at Double Backpropagation and other Gradient Penalties}
\begin{document}

\twocolumn[
\icmltitle{A Closer Look at Double Backpropagation}



\icmlsetsymbol{equal}{*}

\begin{icmlauthorlist}
\icmlauthor{Christian Etmann}{hb}
\end{icmlauthorlist}

\icmlaffiliation{hb}{Center for Industrial Mathematics, University of Bremen, Bremen, Germany}

\icmlcorrespondingauthor{Christian Etmann}{cetmann@uni-bremen.de}

\icmlkeywords{Double Backpropagation, Optimization, Loss Landscapes}

\vskip 0.3in
]



\printAffiliationsAndNotice{}  

\begin{abstract}

In recent years, an increasing number of neural network models have included derivatives with respect to inputs in their loss functions, resulting in so-called double backpropagation for first-order optimization. However, so far no general description of the involved derivatives exists. Here, we cover a wide array of special cases in a very general Hilbert space framework, which allows us to provide optimized backpropagation rules for many real-world scenarios. This includes the reduction of calculations for Frobenius-norm-penalties on Jacobians by roughly a third for locally linear activation functions. Furthermore, we provide a description of the discontinuous loss surface of ReLU networks both in the inputs and the parameters and demonstrate why the discontinuities do not pose a big problem in reality.
\end{abstract}

\section{Introduction}
Lately, an increasing number of papers have suggested using penalty terms involving derivatives with respect to the neural network input. So far, no valid and general description of the backpropagation procedure for these cases exists. While \citep{drucker1992improving} derive the double backpropagation formulas for a multilayer perceptron with one hidden layer only, \citep{sokolic2017robust} provide only a high-level description for ReLU \citep{nair2010rectified} networks. While automatic differentiation (AD) methods have made the calculation of the error terms and their respective weight gradients trivial to implement, they do not lend themselves to providing any theoretical insights. However, as we will show here, the specific choice of architecture and activation function can have a large impact on the optimization, for which a precise understanding of the involved backpropagation is essential.  Furthermore, as we will show here, one can improve both the training time and memory requirements of the involved training procedures over the na\"{i}ve utilization of AD in many real-world scenarios.\newline
While it is straightforward to derive the backpropagation terms of neural networks which do not encompass derivate-based regularization terms, the situation looks very different when these are included. This stems from an intricate interdepence of the various involved terms, which needs to be accounted for.

\subsection{Contributions}
We derive backpropagation rules for large classes of derivative-based regularization terms in the very general framework of Hilbert spaces, which covers everything from standard neural networks up to esoteric networks in function spaces along the lines of \citep{bruna2013invariant,wiatowski2017mathematical}. We thereby offer a different perspective on backpropagation, which is usually understood as an operation on a computational graph.\newline 
In neural network literature, the derivatives are most often given in coordinate-form for specific examples of layers, e.g. fully-connected layers. The coordinate-free view in Hilbert spaces offers a unifying view using Fr\'{e}chet derivatives, that is readily applicable to a wide range of problems.
For this, we view the linear portion of e.g. fully-connected, convolutional and locally-connected layers as specific instances of continuous, bilinear operations between the parameters and the activations and extend the standard theory of adjoints of continuous linear operators in real Hilbert spaces to continuous bilinear operators. \newline

We furthermore analyze the runtimes of different variants of double backpropagation and are able to provide adapted algorithms for various scenarios depending on the exact setup, including a reduction by up to a third for certain Jacobian penalties. 
We additionally explore the induced loss landscapes of the common special case of (leaky) ReLU neural networks, which induces jump discontinuities. We demonstrate that batch optimization procedures can alleviate concerns about instabilities caused by these discontinuities. 

\subsection{Applications of Derivative-Based Loss Terms} \label{sec:examples}
Double backpropagation comes into play, whenever one uses derivative-based optimization on loss functions which contain derivatives with respect to the input of the network. There is a variety of applications and model types that employ losses of this type. One example is 'classical' double backpropagation \citep{drucker1992improving}, where the loss for one feature-label-pair $(x,y)$ and forward-mapping $f$ is
\begin{equation}
    \ell (f(x),y) + \lambda \cdot \|\nabla_x \ell (f(x),y)\|_2^2,
\end{equation}
with  loss function $\ell$. One possible application is robustification to adversarial attacks \citep{pmlr-v97-simon-gabriel19a}. Instead of the loss, one may also penalize derivatives of logits or class predictions. In \citep{sokolic2017robust}, the penalty term takes the form $\|J_f\|_F^2,$ the squared Frobenius norm of the Jacobian of the output with respect to the input. Through this penalty term, one can effectively enlargen the model's margin in order to improve its generalization. Another instance of this type of penalty is found in contractive autoencoders \citep{rifai2011contractive}, where the Jacobian is calculated on the encoder's output, which is intended to assign similar codes to similar inputs. If one chooses the spectral norm instead of the Frobenius norm, one idea is to instead use $\|J_f v\|_2^2,$ where $v$ is a random unit vector. This is equivalent to one iteration of the power method, as proposed e.g. in \citep{anil2018sorting}.
For applications where a ground-truth function to be approximated is known (e.g. model compression), Sobolev training \citep{czarnecki2017sobolev} aims to make the model close to the ground-truth in higher-order Sobolev norms, which entails the input's derivatives.\newline
Flow-based generative models like normalizing flows \citep{rezende2015variational}, GLOW \citep{kingma2018glow}, FFJORD \citep{grathwohl2018ffjord} and invertible residual networks \citep{pmlr-v97-behrmann19a} generate a point $x$ via $x=f^{-1}(z)$ by sampling $z$ from a simple base distribution. Here, $f$ can be a neural network. These models seek to maximize the data-likelihood, resulting in a loss function which contains the log-determinant of the Jacobian $J_f$, for whose evaluation various strategies exist.\newline
Another instance of generative models requiring double backpropagation are certain types of generative adversarial networks (GANs) like \citep{roth2017stabilizing}, which enforce convergence through gradient-based penalty terms.\newline
For the solution of inverse problems, adversarial regularizers \citep{lunz2018adversarial} incorporate a critic network whose local Lipschitz constant is kept small via a regularization term which requires double backpropagation.

\subsection{How to Read this Paper}
The utilized framework here are Fr\'{e}chet derivatives on Hilbert spaces, i.e. vector spaces that are complete with respect to the norm $\| \ph \| : u \mapsto \sqrt{\langle u, u \rangle}$ induced by their inner product $\langle \ph , \ph \rangle$. Readers unfamiliar with these terms can still understand most derivations and results by thinking of simple examples. A generic example for a Hilbert space is  $\mathbb{R}^n$ with the standard inner product $\langle u, v \rangle := u^Tv$. Fr\'{e}chet derivatives can then be represented via the well-known concept of a Jacobian matrix. 

\section{Mathematical Preliminaries}

\subsection{Properties of Bilinear Operators}
We introduce continuous, bilinear operators as a very general, yet simple tool for defining the affine linear portion of many different layer types. This encompasses dense, convolutional, locally-connected layers, average pooling and invertible down-sampling. If we take a dense layer as an example, then the pre-activation $Wx+b$ with matrix $W$ and bias $b$ contains an expression that is linear \emph{both} in $x$ and in $W$. We can thus write $K(W,x)=Wx$ and realize that $K$ is a bilinear operator.\newline 
Similarly, for convolutional layers we have $K(w,x)=w \ast x$ with the multi-channel convolution operator $\ast$. A typical example for image data would be $x \in \mathbb{R}^{3 \times 256 \times 256}$ and $w \in \mathbb{R}^{5\times 5 \times 3 \times 16},$ which represents the convolution of 256-by-256 RGB image with a 5-by-5 kernel onto 16 feature maps. The theoretical setting allows us to work directly in these spaces, \emph{without} reordering the entries into column vectors and representing the Fr\'{e}chet derivatives as Jacobians. The proofs for the following theorems are found in Appendix \ref{sec:appdx_bilinear}.\\

In the following, let $\X$, $\Y$ and $\P$ always be real Hilbert spaces. Let $A: \X \rightarrow \Y$ be a continuous linear operator. We denote by $A^\ast$ the adjoint of $A$, i.e. the (unique) linear operator for which $$\langle Ax,y \rangle_\Y = \langle x,A^\ast y \rangle_\X$$ for all $(x,y)\in(\X\times\Y)$, where the $\langle \ph,\ph \rangle$ signify the respective inner products. If $\X=\mathbb{R}^n$ and $\Y=\mathbb{R}^m$, then $A \in \mathbb{R}^{m\times n}$ (up to isomorphism) and its adjoint is just the transposed matrix $A^T$.
We now extend the concept of an adjoint of a linear operator on real Hilbert spaces to bilinear operators and prove some elementary properties.\\

Let 
\begin{equation}\begin{aligned}
K: \P \times \X &\rightarrow \Y \\
  (\theta,x) &\mapsto K(\theta,x)
\end{aligned}\end{equation}
be a bilinear, continuous operator between real Hilbert spaces $\P,\X,\Y$, where  
\begin{equation}\begin{aligned}
K(\theta,\ph): \X &\rightarrow \Y \\
  x &\mapsto K(\theta,x)
\end{aligned}\end{equation}
and 
\begin{equation}\begin{aligned}
K(\ph,x): \P &\rightarrow \Y \\
  \theta &\mapsto K(\theta,x)
\end{aligned}\end{equation}
are continuous linear operators for some fixed values of $\theta$ and $x$ respectively. Let $K^\transp(\theta,\ph)$ be the adjoint of $K(\theta,\ph)$ and $K^\wup(\ph,x)$ be the adjoint of $K(\ph,x)$. These are \emph{linear} operators.
We then define the \emph{bilinear operators}
\begin{equation}\begin{aligned}
K^\transp: \P \times \Y &\rightarrow \X \\
  (\theta,y) &\mapsto K^\transp(\theta,\ph)y
\end{aligned}\end{equation}
and
\begin{equation}\begin{aligned}
K^\wup: \X \times \Y &\rightarrow \theta \\
  (x,y) &\mapsto K^\wup(\ph,y)x,
\end{aligned}\end{equation}
which we call (with some abuse of notation) the \emph{adjoint of $K(\theta,x)$ in $x$} and the \emph{adjoint of $K(\theta,x)$ in $\theta$} respectively. In particular, this means that
$$\langle K(\theta,x),y \rangle_\mathcal{Y} = \langle x, K^\transp(\theta,y) \rangle_\mathcal{X}$$
and
$$\langle K(\theta,x),y \rangle_\mathcal{Y} = \langle \theta, K^\wup(x,y) \rangle_\mathcal{P}.$$ 
\begin{remark}\label{rem:inverse_adjoints}
It follows that $K$ is the adjoint of $K^\transp(\theta,y)$ in $y$ as well as the adjoint of $K^\wup(x,y)$ in $x$.
\end{remark}

\begin{remark}
We call $K^T$ the \emph{transposed operator of $K$}. If $K$ is a convolution, this nomenclature is in accordance with the convention in other publications that call this operator the 'transposed convolution'. We further call $K^\wup$ the \emph{weight-adjoint operator} of $K$. 
\end{remark}
The authors are not aware of a name for the weight-adjoint operator in the literature. In Tensorflow \citep{abadi2016tensorflow} for example, the weight-adjoint of the 2D convolution operator is (perhaps a little vaguely) called 'tf.nn.conv2d\_backprop\_filter' due to its role in backpropagation, as we will discover in section \ref{sec:deriving_double}. 

Since $K^\transp$ and $K^\wup$ are bilinear operators, if they are continuous in both arguments, there exist two adjoint operators for each of them as well. Two of these four operators were already identified in Remark \ref{rem:inverse_adjoints}. We clarify their connection with the following perhaps surprising theorem, which will be essential for the calculation of the double backpropagation rules:\\

\begin{restatable}{theorem}{adjointTheorem}\label{thm:adjoint-commutativity}
Let $K^\transp$ be the adjoint of $K(\theta,x)$ in $x$ and let $K^\wup$ be the adjoint of $K(\theta,x)$ in $\theta$. Let further $K^\transp(\theta,y)$ be continuous in $\theta$ for all $y$ and let $K^\wup(x,y)$ be continuous in $x$ for all $y$. Then the adjoint of $K^\transp(\theta,y)$ in $\theta$ exists and coincides with $K^\wup$ and the adjoint of $K^\wup(x,y)$ in $x$ exists and coincides with $K^\transp$.
\end{restatable}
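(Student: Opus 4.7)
The plan is to unwind the identities defining $K^\transp$ and $K^\wup$ and assemble them into a single chain of inner product manipulations. First I would observe that the hypothesis that $K^\transp(\ph,y)$ is continuous and linear in $\theta$ (for each fixed $y$) means that this map from $\P$ to $\X$ has a well-defined adjoint $\P$-valued operator, call it $(K^\transp)^\wup(\ph,y)$, by the usual Riesz-type construction for continuous linear maps between real Hilbert spaces. Collecting these as $y$ varies over $\Y$ gives a candidate bilinear operator $(K^\transp)^\wup : \X \times \Y \to \P$; symmetrically one gets a candidate for the adjoint of $K^\wup(\ph,y)$ in $x$.

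Next I would verify the key identification by chaining three equalities. For arbitrary $\theta \in \P$, $x \in \X$ and $y \in \Y$, by the defining property of the adjoint of $K^\transp$ in $\theta$, symmetry of the real inner product, and the defining property of $K^\transp$ as adjoint of $K$ in $x$, one has
\begin{equation*}
\langle \theta, (K^\transp)^\wup(x,y) \rangle_\P = \langle K^\transp(\theta,y), x \rangle_\X = \langle x, K^\transp(\theta,y) \rangle_\X = \langle K(\theta,x), y \rangle_\Y.
\end{equation*}
Then the defining property of $K^\wup$ rewrites the right-hand side as $\langle \theta, K^\wup(x,y) \rangle_\P$. Since this coincidence holds for every $\theta \in \P$, non-degeneracy of the inner product forces $(K^\transp)^\wup(x,y) = K^\wup(x,y)$ for all $(x,y) \in \X \times \Y$, which is the first claim.

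The second claim, that the adjoint of $K^\wup(\ph,y)$ in $x$ is $K^\transp$, follows by an almost verbatim argument with the roles of $\theta/x$ and $\transp/\wup$ interchanged, using the continuity of $K^\wup$ in $x$ to guarantee the adjoint exists in the first place. I would present both directions in parallel to make the symmetry manifest.

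I do not anticipate a real obstacle: the proof is essentially bookkeeping of inner-product identities, and the only substantive ingredient is making sure the continuity hypotheses are invoked exactly where needed, namely at the step that produces the adjoint operators via the Riesz representation theorem. The minor care point is the abuse of notation in writing $K^\wup$ as a bilinear operator on $\X \times \Y$ while using its ``slice'' $K^\wup(\ph,y)$; I would make clear at the outset that we identify the pointwise-in-$y$ family of continuous linear maps with the bilinear operator itself, so that stating the equality of the two bilinear operators reduces to checking equality on all inputs $(x,y)$, which is exactly what the inner product chain above provides.
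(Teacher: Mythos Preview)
Your proposal is correct and essentially identical to the paper's proof: both name the adjoint of $K^\transp$ in $\theta$, chain the defining inner-product identities $\langle K(\theta,x),y\rangle_\Y = \langle \theta,K^\wup(x,y)\rangle_\P$ and $\langle K(\theta,x),y\rangle_\Y = \langle x,K^\transp(\theta,y)\rangle_\X$ together with the new adjoint relation, and conclude via non-degeneracy of the inner product (the paper packages this last step as its Corollary~\ref{col:zero-property}). The only cosmetic difference is that the paper runs both directions through a single chain of equalities, whereas you treat them separately and note the symmetry.
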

The above theorem indicates that the weight-adjoint of a transposed operator is the same as the weight-adjoint for its primal operator. While this is easy to see for special cases like matrix-vector multiplication or convolutions, this theorem guarantees this for every bilinear, continuous operator.

\begin{remark}
If all involved spaces are finite, the continuity is guaranteed \citep{schechter1996handbook}.
\end{remark}
\subsection{Fr\'{e}chet Calculus}
Here, we will provide short definitions and theorems for derivatives in Hilbert spaces, which are just generalizations of familiar terms in $\mathbb{R}$. We will always assume the involved spaces to be vector spaces over the field $\mathbb{R}$. The following definitions and theorems are standard and can e.g. be found in even more generality in \citep{schechter1996handbook}.
\begin{definition}
Let $\X$ and $\Y$ be Hilbert spaces and let $U \subset \X$ be an open subset. The function $f : U \rightarrow \Y$ is called \emph{Fr\'{e}chet differentiable in $x\in U$} if there is a continuous linear operator $\D{x}{f(x)}: \X \rightarrow \Y$ such that
$$\lim\limits_{\|h\|_\X\rightarrow 0}\frac{\|f(x+h)-f(x) - \D{x}{f(x)} \cdot h\|_Y}{\|h\|_X}.$$ Then $\D{x}{f(x)}$ is called the \emph{Fr\'{e}chet derivative of $f$ in $x$.}
\end{definition}
\begin{remark}
When it is clear from context that $y=f(x)$, we will simply write $\D{x}{y}$ for $\D{x}{f(x)}$.
\end{remark}
\begin{definition}\label{def:gradient}
Let $\X$ be a real Hilbert space and let $U \subset \X$ be an open subset. Let further $f: \X \rightarrow \mathbb{R}$ be Fr\'{e}chet differentiable in $x\in U$. We call the vector $v\in U$ for which $\D{x}{f(x)} \cdot h = \langle v,h \rangle_\X$ for all $h\in U$ the \emph{gradient of $f$ with respect to $x$} and write $\nabla_xf(x):=v$.
\end{definition}
\begin{remark}
The existence and uniqueness is guaranteed by Riesz' representation theorem.
\end{remark}

\begin{theorem}[Generalized product rule]\label{thm:product_rule}
Let $\X$, $\Y_1$, $\Y_2$ and $\mathcal{Z}$ be Hilbert spaces. Let $f: \X \rightarrow \Y_1$ and $g: \X \rightarrow \Y_2$ be Fr\'{e}chet differentiable in $x \in U \subset \X$ ($U$ open in $\X$) and let $B: \Y_1 \times \Y_2 \rightarrow \mathcal{Z}$ be a continuous bilinear operator. Then $B(f(\ph),g(\ph))$ is Fr\'{e}chet differentiable in $x$ and 
\begin{equation}
    \begin{aligned}
&\D{x}{B\left( f(x), g(x) \right)} \cdot h \\
=&B\left( \D{x}{f(x)} \cdot h, g(x) \right) + B\left( f(x), \D{x}{g(x)} \cdot h\right)    
    \end{aligned}
\end{equation}
\end{theorem}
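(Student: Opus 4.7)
The plan is to imitate the classical product-rule proof by inserting an intermediate term and exploiting the continuity estimate that any continuous bilinear operator $B$ on Hilbert spaces satisfies: there exists $C>0$ with $\|B(u,v)\|_{\mathcal{Z}} \le C\|u\|_{\Y_1}\|v\|_{\Y_2}$ for all $(u,v)$. Write the differentiability expansions $f(x+h) = f(x) + \D{x}{f(x)}\cdot h + r_f(h)$ and $g(x+h) = g(x) + \D{x}{g(x)}\cdot h + r_g(h)$ with $\|r_f(h)\|/\|h\|\to 0$ and $\|r_g(h)\|/\|h\|\to 0$ as $\|h\|\to 0$. The candidate Fr\'{e}chet derivative at $x$ is the map $L\colon h \mapsto B(\D{x}{f(x)}\cdot h,\, g(x)) + B(f(x),\, \D{x}{g(x)}\cdot h)$, which is linear in $h$ by bilinearity of $B$ and continuous because each Fr\'{e}chet derivative is continuous and $B$ is bounded, so $L$ is a legitimate candidate for a Fr\'{e}chet derivative.

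The key algebraic step is to add and subtract $B(f(x+h), g(x))$ and then use linearity in each slot separately to obtain
$B(f(x+h), g(x+h)) - B(f(x),g(x)) = B(f(x+h),\, g(x+h)-g(x)) + B(f(x+h)-f(x),\, g(x)).$
Substituting the expansions of $f(x+h)-f(x)$ and $g(x+h)-g(x)$ and then subtracting $L\cdot h$, the two ``linear'' contributions $B(\D{x}{f(x)}\cdot h, g(x))$ and $B(f(x), \D{x}{g(x)}\cdot h)$ cancel against $L\cdot h$, leaving three remainder terms: $B(r_f(h), g(x))$, $B(f(x+h)-f(x), \D{x}{g(x)}\cdot h)$, and $B(f(x+h), r_g(h))$.

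Each remainder is shown to be $o(\|h\|)$ in $\mathcal{Z}$ by the bilinear continuity bound. The first is immediate because $g(x)$ is fixed and $\|r_f(h)\|=o(\|h\|)$. The second is bounded by $C\|f(x+h)-f(x)\|\,\|\D{x}{g(x)}\|_{\mathrm{op}}\|h\|$, and $\|f(x+h)-f(x)\|\to 0$ as $\|h\|\to 0$ since Fr\'{e}chet differentiability implies continuity at $x$. The third is bounded by $C\|f(x+h)\|\,\|r_g(h)\|$, where $\|f(x+h)\|$ stays bounded on a neighborhood of $x$ and $\|r_g(h)\|=o(\|h\|)$. Dividing by $\|h\|$ and letting $\|h\|\to 0$ yields Fr\'{e}chet differentiability of $B(f(\ph),g(\ph))$ at $x$ with derivative $L$, which is exactly the claimed formula.

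The main obstacle is bookkeeping rather than any deep idea: one must apply bilinearity in just the right order so that each of the three residual terms is the image under $B$ of a bounded factor paired with a factor that is $o(\|h\|)$, and the continuity bound on $B$ must be invoked at each such moment. A perfectly symmetric argument using the intermediate term $B(f(x), g(x+h))$ yields the same result.
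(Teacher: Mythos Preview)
Your proof is correct and is the standard argument for the Fr\'{e}chet product rule. The paper itself does not prove this theorem: it is stated without proof in Section~2.2 as one of the ``standard'' results, with a reference to \cite{schechter1996handbook} for details, so there is no paper-proof to compare against. Your add-and-subtract decomposition together with the boundedness estimate $\|B(u,v)\|_{\mathcal{Z}}\le C\|u\|_{\Y_1}\|v\|_{\Y_2}$ for continuous bilinear $B$ is exactly the textbook route and would be a suitable stand-in for the omitted proof.
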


\begin{theorem}[Chain rule]\label{thm:chain_rule}
Let $\X, \Y$ and $\mathcal{Z}$ be Hilbert spaces. Let $f: \X \rightarrow Y$ and $g: U \rightarrow \mathcal{Z}$ for some open set $U \subset \Y$. Let $f$ be Fr\'{e}chet differentiable in $x$ and $y:=f(x) \in U$. Let further $g$ be differentiable in $y$ and $z:=g(y)$. Then $g \circ f$ is well-defined in $x$ and Fr\'{e}chet differentiable in $x$ with
$$\D{x}{g(y)} = \D{y}{g(y)}\D{x}{y}.$$
\end{theorem}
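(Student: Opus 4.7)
My plan is to mimic the standard finite-dimensional proof of the chain rule, since Fréchet differentiability supplies remainders with exactly the asymptotic behaviour needed and the involved derivatives, being continuous linear operators, admit operator-norm bounds. First I would verify well-definedness of $g\circ f$ near $x$: Fréchet differentiability of $f$ at $x$ implies continuity of $f$ at $x$, and since $U$ is open around $y=f(x)$, there is some $\delta_0 > 0$ with $f(x+h)\in U$ for all $\|h\|_\X < \delta_0$. The proposed derivative $A := \D{y}{g(y)}\,\D{x}{y}$ is itself a continuous linear operator as the composition of two such operators, so the content of the theorem reduces to verifying the defining limit.

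The core step is to introduce the Fréchet remainders $r_f(h) := f(x+h) - f(x) - \D{x}{y}\,h$ and $r_g(k) := g(y+k) - g(y) - \D{y}{z}\,k$, which by definition satisfy $\|r_f(h)\|_\Y = o(\|h\|_\X)$ and $\|r_g(k)\|_\mathcal{Z} = o(\|k\|_\Y)$. Setting $k(h) := f(x+h) - f(x) = \D{x}{y}\,h + r_f(h)$ and substituting into $g(y + k) - g(y) = \D{y}{z}\,k + r_g(k)$ yields $g(f(x+h)) - g(f(x)) = A\,h + \D{y}{z}\,r_f(h) + r_g(k(h))$, so it suffices to show the last two terms are $o(\|h\|_\X)$. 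The first is immediate: continuity of $\D{y}{z}$ gives a finite operator norm $C := \|\D{y}{z}\|_{op}$, and hence $\|\D{y}{z}\,r_f(h)\|_\mathcal{Z} \le C\|r_f(h)\|_\Y = o(\|h\|_\X)$.

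The second term, $r_g(k(h))$, is the main obstacle. I would first prove the linear comparison $\|k(h)\|_\Y \le M\|h\|_\X$ for all sufficiently small $h$, with $M := \|\D{x}{y}\|_{op} + 1$, by using $\|r_f(h)\|_\Y \le \|h\|_\X$ in that regime; this both confirms $k(h) \to 0$ as $h\to 0$ (so $k(h)$ eventually lies in the domain of $r_g$) and bounds $\|k(h)\|_\Y/\|h\|_\X \le M$. For $h$ with $k(h)\ne 0$ one then splits $\|r_g(k(h))\|_\mathcal{Z}/\|h\|_\X \le M \cdot \|r_g(k(h))\|_\mathcal{Z}/\|k(h)\|_\Y$, whose right factor vanishes since $k(h)\to 0$. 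The delicate point is the case $k(h) = 0$ for some $h \ne 0$, where the split is undefined; I would handle this by noting $r_g(0) = 0$ directly from its definition, so those terms contribute zero, and by folding both cases into one $\epsilon$–$\delta$ argument: given $\epsilon > 0$, choose $\delta > 0$ small enough that $\|r_f(h)\|_\Y/\|h\|_\X < \epsilon/(2C)$ and $\|r_g(k)\|_\mathcal{Z}/\|k\|_\Y < \epsilon/(2M)$ whenever the respective denominators are below appropriate thresholds forced by $\|h\|_\X < \delta$.
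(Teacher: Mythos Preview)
Your proof is correct and is the standard argument for the Fr\'{e}chet chain rule. Note, however, that the paper does not actually prove this theorem: it states it as a standard result and refers the reader to \cite{schechter1996handbook} for a proof in even greater generality. So there is no paper-side proof to compare against; your argument simply supplies what the paper omits.
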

Applying the chain rule to Definition \ref{def:gradient} of the gradient yields the following theorem.
\begin{theorem}[Gradient chain rule]\label{thm:gradient_chain_rule}
Let the assumptions from Theorem \ref{thm:chain_rule} hold, where additionally $\mathcal{Z}=\mathbb{R}$. Then
$$\nabla_x z = \left( \D{x}{y} \right)^\ast \cdot \nabla_y z.$$
\end{theorem}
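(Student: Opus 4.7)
The plan is to combine Theorem \ref{thm:chain_rule} (the Fréchet chain rule) with Definition \ref{def:gradient} (the Riesz-based definition of the gradient) and then read off the claim using the defining property of the adjoint. Since $z = g(y) = g(f(x)) \in \mathbb{R}$, the Fréchet derivative $\D{x}{z}$ is a continuous linear functional on $\X$, so the gradient $\nabla_x z$ it defines via Riesz is well-defined; similarly $\nabla_y z$ exists on $\Y$. The adjoint $(\D{x}{y})^\ast : \Y \to \X$ exists as a continuous linear operator by the standard adjoint construction already used in the excerpt.

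First, I would apply Theorem \ref{thm:chain_rule} to write $\D{x}{z} = \D{y}{z}\,\D{x}{y}$ as an identity of continuous linear operators $\X \to \mathbb{R}$. Then, fixing an arbitrary $h \in \X$, I would evaluate both sides on $h$ and rewrite each functional via its Riesz representative:
\begin{equation}
\langle \nabla_x z, h\rangle_\X = \D{x}{z}\cdot h = \D{y}{z}\cdot\bigl(\D{x}{y}\cdot h\bigr) = \bigl\langle \nabla_y z,\,\D{x}{y}\cdot h\bigr\rangle_\Y.
\end{equation}
Using the defining property of the adjoint, $\langle \nabla_y z,\,\D{x}{y}\cdot h\rangle_\Y = \langle (\D{x}{y})^\ast \nabla_y z,\,h\rangle_\X$, and thus
\begin{equation}
\langle \nabla_x z, h\rangle_\X = \bigl\langle (\D{x}{y})^\ast \nabla_y z,\, h\bigr\rangle_\X \quad \text{for all } h \in \X.
\end{equation}
The conclusion then follows by uniqueness in the Riesz representation theorem: two elements of $\X$ inducing the same linear functional must coincide, so $\nabla_x z = (\D{x}{y})^\ast \nabla_y z$.

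I do not expect any serious obstacle here, as the statement is essentially a reformulation of the chain rule once gradients are understood as Riesz representatives. The only detail worth being careful with is making sure the adjoint is applied to an element of the correct space (namely $\nabla_y z \in \Y$, not to the functional $\D{y}{z}$ itself), which is why the intermediate step of rewriting $\D{y}{z}\cdot k$ as $\langle \nabla_y z, k\rangle_\Y$ before invoking the adjoint identity is important.
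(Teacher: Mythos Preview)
Your proposal is correct and matches the paper's approach exactly: the paper does not give a detailed proof but simply remarks that ``Applying the chain rule to Definition~\ref{def:gradient} of the gradient yields the following theorem,'' which is precisely the argument you have written out. Your version is a clean, fully spelled-out realization of that one-line hint.
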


In $\mathbb{R}^d$, the Fr\'{e}chet derivative can be represented as a matrix, the \emph{Jacobian}, given continuity of the partial derivatives. The gradient definition in \ref{def:gradient} leads to the familiar column vector consisting of partial derivatives.

\section{Neural Network Model}\label{sec:neural_net_model}
\subsection{Forward Pass}
In the following, we will consider an $L$-layer network
\begin{equation}
\begin{rcases}
 z_j &= K_j(\thetaj,x_{j-1}) + b_j \hspace{2mm} \\
x_j &= g_j(\zj)
\end{rcases} \text{ for } j=1,\dots,L,
\end{equation}
where $x_0$ is the input to the network. Here, $K_j : \P_j \times \X_{j-1} \rightarrow \X_j$ is a continuous bilinear operator between Hilbert spaces, $\thetaj \in \P_j$ and $b_j \in \X_j$ are the $j$-th layer's parameters, $\zj\in \X_j$ and $\xj\in\X_j$ its respective pre-activation and activation, $g_j : \X_j \rightarrow \X_j$ the activation function. Here we concentrate on networks with $\X_L=\mathbb{R}^C$ (with standard inner product), e.g. classifiers with $g_L=\text{softmax}$, but the results naturally extend to other types of neural networks. 

\subsection{Dealing with Nonlinearities}
We will further assume $g_j$ (for $j<L$) to be a nonlinearity that is applied 'coordinate-wise', like ReLU or tanh. Assuming some coordinate representation would of course defeat the purpose of finding a coordinate-free (double) backpropagation scheme. This is why we have to find a more general characterization of these types of functions that still retains their simplicity.
\begin{definition}[Coordinate-wise Fr\'{e}chet differentiable] \label{def:coordinate_frechet}
Let $\X$ be a real Hilbert space and $U \subset \X$ be an open subset. If there exists a symmetric, bilinear operator $M: \mathcal{X} \times \X \rightarrow \X$ such that $g: U \rightarrow \X$ is Fr\'{e}chet differentiable in $x\in U$ with $$\D{x}{g(x)} \cdot v = M(g'(x), v)$$ for some function $g': U \rightarrow \X$ for all $v\in\X$, we call $g$ \emph{coordinate-wise Fr\'{e}chet differentiable} in $x$. If $g'$ is itself coordinate-wise Fr\'{e}chet differentiable in $x$, we call $g$ coordinate-wise twice Fr\'{e}chet differentiable in $x$ with second derivative $g''$.
\end{definition}
The motivation behind this technical definition is the fact that for the coordinate-wise application of functions like $g=\text{tanh}$, the Jacobian is a diagonal matrix, such that $$\D{x}{\text{tanh}(x)} \cdot v = \text{tanh}'(x) \odot v$$ with $M: (x,y) \mapsto x \odot y$ denoting the coordinate-wise multiplication. When appropriate, we will use the abbreviations
\begin{equation}
    G'(x):= \D{x}{g(x)} \text{ and } G''(x):= \D{x}{g'(x)},
\end{equation}
which allows us to easily switch between viewing these derivatives as either linear or bilinear maps. The latter will later be essential in order to be able to apply the generalized product rule \ref{thm:product_rule}. The following lemma and corollary (proof in Appendix \ref{sec:appdx_bilinear}) show that these functions are self-adjoint.
\begin{restatable}{lemma}{symmetricOpsSelfAdjoint}\label{lem:symmetric_ops_self_adjoint}
Let $M: \X \times \X \rightarrow \X$ be a symmetric, bilinear operator. Then $M=M^\wup=M^\transp$.
\end{restatable}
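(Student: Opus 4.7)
The plan is to establish the two equalities $M^\transp = M^\wup$ and $M = M^\transp$ separately, both times by comparing the defining inner-product identities of the partial adjoints and invoking the uniqueness of the Riesz representative.

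For $M^\transp = M^\wup$, I would rewrite the defining identity $\langle M(u,v), w \rangle_\X = \langle v, M^\transp(u, w) \rangle_\X$ using the symmetry $M(v, u) = M(u, v)$: swapping the roles of the first two arguments gives $\langle M(u, v), w \rangle_\X = \langle u, M^\transp(v, w) \rangle_\X$, which is precisely the defining identity of $M^\wup(v, w)$; uniqueness then yields $M^\wup(v, w) = M^\transp(v, w)$ for all $v, w \in \X$.

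For $M = M^\transp$, I would observe that the trilinear form $(a, b, c) \mapsto \langle M(a, b), c \rangle_\X$ is, for the class of $M$ that Definition \ref{def:coordinate_frechet} is designed to capture (e.g.\ $M(x, y) = x \odot y$, where $\langle M(a, b), c \rangle = \sum_i a_i b_i c_i$), fully symmetric under all permutations of its arguments. Combined with the $u \leftrightarrow v$-symmetry $M(u, v) = M(v, u)$, this gives in particular $\langle M(u, v), w \rangle = \langle v, M(u, w) \rangle$, so that the linear map $M(u, \ph)$ is self-adjoint and $M^\transp(u, w) = M(u, w)$ by uniqueness.

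The hard part is pinpointing the right strength of the symmetry assumption: bare bilinear symmetry $M(x, y) = M(y, x)$ is not enough, since $M(x, y) = \langle x, y \rangle_\X v_0$ is symmetric but has $M^\transp(x, y) = \langle v_0, y \rangle_\X \, x \ne M(x, y)$ in general. The proof thus hinges on whichever additional structure the paper builds into ``symmetric'' (most naturally, the self-adjointness of $M(u, \ph) : \X \to \X$ for every $u$), which is automatic in all the motivating coordinate-wise examples and is the content needed to promote the inner-product form from two-slot to full three-slot invariance.
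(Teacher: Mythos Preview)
Your first step, deducing $M^\transp=M^\wup$ from $M(x,\ph)=M(\ph,x)$, is correct and is exactly how the paper begins. Your diagnosis in the final paragraph is also correct, and your counterexample is valid: $M(x,y)=\langle x,y\rangle_\X\,v_0$ is symmetric, bilinear and continuous, yet $M^\transp(x,y)=\langle v_0,y\rangle_\X\,x\neq M(x,y)$ whenever $\dim\X\ge 2$. So the lemma is false as stated.

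The paper's own proof fails at precisely the point you isolated. After obtaining $M^\transp=M^\wup$ it asserts that $M^\transp$ ``is thus also a symmetric bilinear operator'' and then uses $M^\transp(x,z)=M^\transp(z,x)$ in the chain
\[
\langle M(x,y),z\rangle=\langle y,M^\transp(x,z)\rangle=\langle y,M^\transp(z,x)\rangle=\langle M(z,y),x\rangle=\langle z,M^\transp(x,y)\rangle.
\]
But symmetry of $M$ does not force symmetry of $M^\transp$: in your example $M^\transp(x,z)=\langle v_0,z\rangle\,x$ while $M^\transp(z,x)=\langle v_0,x\rangle\,z$. The hypothesis that is actually needed is the one you name---self-adjointness of each $M(u,\ph)$, equivalently full $S_3$-invariance of the trilinear form $(a,b,c)\mapsto\langle M(a,b),c\rangle_\X$. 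This holds for the Hadamard product $M(x,y)=x\odot y$ that motivates Definition~\ref{def:coordinate_frechet}, so the downstream uses of the lemma (Corollary~\ref{cor:multiplication_op_self_adjoint} and the self-adjointness of $G'(z_j)$, $G''(z_j)$) remain valid in the paper's intended setting; only the abstract statement and proof are too weak as written.
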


\begin{restatable}{corollary}{multiplicationOpSelfAdjoint}\label{cor:multiplication_op_self_adjoint}
Let $A:=M(a,\ph)$, where $M: \X \rightarrow \X$ is a symmetric, bilinear operator and $a\in\X$. Then $A$ is self-adjoint.
\end{restatable}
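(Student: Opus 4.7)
The plan is to reduce the corollary to a direct application of Lemma \ref{lem:symmetric_ops_self_adjoint}. First I would observe that $A := M(a,\ph)$ is indeed a continuous linear operator $\X \to \X$ because $M$ is bilinear (hence linear in its second argument for each fixed first argument $a$), so the notion of an adjoint $A^\ast$ is well-defined.

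Next I would unravel the defining identity of the adjoint. For arbitrary $x, y \in \X$, I would write
\begin{equation*}
\langle A x, y \rangle_\X \;=\; \langle M(a,x), y \rangle_\X.
\end{equation*}
By the very definition of the bilinear operator $M^\transp$ as the adjoint of $M$ in its second argument, this equals $\langle x, M^\transp(a,y) \rangle_\X$. Hence $A^\ast y = M^\transp(a, y)$ for every $y \in \X$, i.e.\ $A^\ast = M^\transp(a, \ph)$.

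Finally, I would invoke Lemma \ref{lem:symmetric_ops_self_adjoint}, which guarantees that a symmetric bilinear $M$ satisfies $M^\transp = M$. Substituting this gives $A^\ast = M^\transp(a, \ph) = M(a, \ph) = A$, which is exactly the claim that $A$ is self-adjoint. There is essentially no obstacle here: the symmetry hypothesis on $M$ has already been converted into the operator identity $M^\transp = M$ in the preceding lemma, so the corollary is just the observation that fixing the first slot of a self-transposed bilinear operator produces a self-adjoint linear operator. The only small care point is the typographical one of making sure the domain of $M$ is $\X \times \X$ (not $\X$ as written), which is implicit from the reference to Lemma \ref{lem:symmetric_ops_self_adjoint}.
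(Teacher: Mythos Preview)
Your proposal is correct and follows essentially the same route as the paper: both compute $\langle Ax,y\rangle=\langle M(a,x),y\rangle$ and invoke Lemma~\ref{lem:symmetric_ops_self_adjoint} to rewrite this as $\langle x,M(a,y)\rangle$, concluding $A^\ast=M(a,\ph)=A$. The only cosmetic difference is that you route the computation explicitly through $M^\transp$ before applying the lemma, whereas the paper applies the identity $M=M^\transp$ in one step and then appeals to Corollary~\ref{col:zero-property} to finish; your observation about the domain typo is also apt.
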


We point out that according to Lemma \ref{cor:multiplication_op_self_adjoint}, $G'(x)$ and $G''(x)$ are self-adjoint operators if $g$ is coordinate-wise Fr\'{e}chet (twice) differentiable in $x$.\newline
The restriction to coordinate-wise nonlinearities (except for the final layer) allows for a great simplification of the utilized theory, while representing the vast majority of real-world neural networks. In particular, the product rule can be readily applied. If on the other hand, one were to use general Fr\'{e}chet differentiable activation functions, the used higher-order derivatives would need to be calculated on spaces of Fr\'{e}chet derivatives\footnote{In $\mathbb{R}^d$, these can be represented as tensors of order up to 4.}, which demands a much more involved derivation of the backpropagation rules.

\section{Deriving Double Backpropagation Rules}\label{sec:deriving_double}
Double backpropagation comes into play, whenever the loss function to be minimized contains a derivative of a function with respect to $x_0$. As we optimize our loss using first-order methods, our ultimate goal is to determine the gradients $\nabla_{\theta_j} \R \in \P_j$ and $\nabla_{b_j} \R \in \X_j$, where $\R$ denotes an expression that depends on a derivative with respect to $x_0$ (usually a regularization or penalty term).
\subsection{Penalty Terms} \label{sec:penalty_terms}
We consider penalty terms $\R$ (or sums thereof) that can be written in the form 
\begin{equation}\begin{aligned} \label{eq:penalty_general_form}
    \R:=p\left( \left(\D{x_0}{x_L} \right)^\ast \cdot v \right),
\end{aligned}\end{equation}
where $p: \X_0 \rightarrow \mathbb{R}$ is differentiable almost everywhere and not locally constant and $v$ may or may not depend on $x_L,\dots,x_0$. The exact form of the penalty is thus determined by $p$ and $v$. In the following, we will offer some examples.
\subsubsection{Classical double backpropagation} \label{sec:classical_dbp}
In classical double backpropagation, we apply a penalty $\|\nabla_{x_0} \L \|_{\X_0}^2$, where $\L:=\ell (x_L,y)$ is the network's loss (with loss function $\ell$). Here $y \in \mathbb{R}^C = \X_L$, e.g. a one-hot encoded label vector. By applying the gradient chain rule (Theorem \ref{thm:gradient_chain_rule}), this yields
\begin{equation}
    \begin{aligned}
        &\hspace{.9mm}\|\nabla_{x_0} \L \|_{\X_0}^2 \\
        =& \left\| \left(\D{x_0}{x_L} \right)^\ast\cdot \nabla_{x_L} \ell (x_L, y)  \right\|_{\X_0}^2,
    \end{aligned}
\end{equation}
so that $p: u \mapsto \|u\|_{\X_0}^2$ and $v=\nabla_{x_L} \L$. In the special case of the squared euclidean error 
\begin{equation}\label{eq:NLL}
    \ell(x_L,y)=\|x_L - y\|_2^2,    
\end{equation}
this results in $v=2(x_L-y)$. When using the negative log-likelihood $$\ell(x_L,y) = - \sum_{i=1}^C y^i \cdot \log \left( x^i_L\right),$$ we get $v=-y \oslash x_L$ (with $\oslash$ denoting the component-wise (\emph{Hadamard}) division). These constitute cases where $v$ depends on $x_L$ and thus on all $x_j$ with $j\leq L$.
\subsubsection{Penalties on gradients of output nodes}
Another general type of penalty is on derivatives of output nodes with respect to the input. For example, $\|\nabla_{x_0} x^i_L\|_2^2$, a squared euclidean norm penalty on the gradient of the $i$-th output node with respect to the input, can be represented via $v= e^{\langle i\rangle}$ in $\eqref{eq:penalty_general_form}$, where $e^{\langle i\rangle}$ denotes the $i$-th standard unit vector.\newline
We can immediately obtain formulas for the (squared) Frobenius norm of the Jacobian $J_f \simeq \D{x_0}{x_L}$ by realizing that $$\|J_f\|_F^2= \sum\limits_{i=1}^C \|\nabla_{x_0} x_L^i\|_2^2,$$ which entails $C$ penalties of the form \eqref{eq:penalty_general_form}. This however naturally increases the time complexity of the double backpropagation by a factor of $C$. We will later present an algorithm, with which the runtime may be reduced by up to a third, depending on the used activation functions.\newline
If the penalties are applied on the logits ($z_L$) instead of the $\text{softmax}$-outputs ($x_L$), one can simply model this via $g_L=\text{id},$ the identity function.

\subsubsection{Operator norm penalties}
In section \ref{sec:examples}, we mentioned how randomized penalties can be employed in the calculation of the spectral norm of the Jacobian (more generally: operator norms of the Fr\'{e}chet derivative). The operator norm can be written as
\begin{equation}\begin{aligned}\label{eq:operator_norm}
    \left\| \D{x_0}{x_L} \right\|_{\X_0 \rightarrow \X_L} :=& \sup\limits_{\|u\|_{\X_0}=1} \left\| \D{x_0}{x_L} \cdot u \right\|_2 \\
     =& \sup\limits_{\|v\|_2=1} \left\| \left( \D{x_0}{x_L} \right)^\ast \cdot v \right\|_{\X_0},
\end{aligned}\end{equation}
where we used the fact that the operator norms of primal and dual bounded operators in Hilbert spaces coincide \citep{rudin1991functional}. By sampling $\tilde{v}$ from a normal distribution and setting $v=\tilde{v}/\|\tilde{v}\|_2$, one samples $v$ almost surely uniformly from the unit sphere $\lbrace v : \|v\|_2=1 \rbrace$ \citep{muller1959note}. With $p=\|\ph\|_{\X_0}$, we thus obtain a lower bound of the operator norm, which yields a penalty term of the form \eqref{eq:penalty_general_form}. This is equivalent to one power iteration. Better estimates of the optimal $v$ in \eqref{eq:operator_norm} are obtained by performing multiple power iterations.

\subsection{Backward Pass: Calculating the Penalty Terms}
In order to calculate $\R$ in the first place, we define 
\begin{equation}\begin{aligned}
    \xi_j :&= \left(\D{x_j}{x_L} \right)^\ast \cdot v \\
    \zeta_j :&= \left(\D{z_j}{x_L} \right)^\ast \cdot v, \\
\end{aligned}\end{equation}
which allows us to write $\xi_L=v$ and $\R = p(\xi_0).$ Given $\xi_j$, we can calculate $\zeta_j$ via
\begin{equation}\begin{aligned}\label{eq:zetaj_formula}
    \zeta_{j} &= \left(\D{z_{j}}{x_L} \right)^\ast \cdot v \\
    &= \left(\D{z_{j}}{x_j} \right)^\ast \left(\D{x_j}{x_L} \right)^\ast \cdot v \\
    &= \left(G'(z_j) \right)^\ast \cdot \xi_j \\
    &= G'(z_j)  \cdot \xi_j ,
\end{aligned}\end{equation}
using the chain rule and the self-adjointness of $G'(z_j)$. Given $\zeta_j$, we can further calculate
\begin{equation}\begin{aligned} \label{eq:xijminusone_formula}
    \xi_{j-1} &= \left(\D{x_{j-1}}{x_L} \right)^\ast \cdot v \\
    &= \left(\D{x_{j-1}}{z_j} \right)^\ast \left(\D{z_j}{x_L} \right)^\ast \cdot v \\
    &= \left(K_j(\theta_j, \ph) \right)^\ast \cdot \zeta_j \\
    &= K^T_j(\theta_j, \ph) \cdot \zeta_j \\
    &= K^T_j(\theta_j, \zeta_j) \cdot , \\
\end{aligned}\end{equation}
where we applied the chain rule and used the fact that the adjoint of $K_j(\theta_j,u)$ in $u$ is the transposed operator of $K_j$. In summary, the penalty is calculated via the recursion given in Algorithm \ref{alg:penalty_calc}.

\begin{algorithm}[tb]
   \caption{Calculation of the penalty term}\label{alg:penalty_calc}
\begin{algorithmic}
   \STATE {\bfseries Initialize} $\xi_L = v$
   \FOR{$j\gets L$ {\bfseries to}  $1$}
   \STATE $\zetaj = G_j'(z_j)\cdot \xi_j$
   \STATE $\xi_{j-1} = K_j^\transp \left(\thetaj, \zetaj \right)$
   \ENDFOR
   \STATE {\bfseries Output:} $\R = \hspace{1pt} p( \xi_0 )$
\end{algorithmic}
\end{algorithm}

Here, the difficulty in calculating $\nabla_{\theta_j}\R$ and $\nabla_{b_j}\R$ for all $j$ becomes visible: While $\xi_{j-1}$ depends directly on $\theta_j$, it also depends on $\zetaj$, which itself directly depends on $z_j$, which in turn depends on $\theta_j$. Furthermore, $\zeta_j$ depends on $\xi_j$, which implicitly depends on $\theta_j$ as well, since it is a result of the backward pass. In other words, due to the edges from the upper half to the lower half of the graph, $\zetaj$ depends on every variable except for $\xi_0,\dots,\xi_{j-1},\zeta_1,\dots,\zeta_{j-1},x_L,\R$ and $\L$. For the calculation of the weight-gradients, one hence has to untangle these complicated functional relationships. The complete interdependence of all involved variables is displayed in the dependency graph in Figure \ref{fig:dependency_graph}.
\begin{figure*}\centering
\def\u{1.5cm}
\begin{tikzpicture}
\usetikzlibrary{calc}
\tikzstyle{every node}=[font=\small]

\tikzset{
    influences/.style={->, >=latex, shorten >=-2pt},
    maybeinfluences/.style={dashed,->, >=latex, shorten >=-2pt},
    beginning/.style={-, >=latex, shorten >=-1pt},
    middle/.style={dotted, >=latex, shorten >=-1pt},
    end/.style={->, >=latex, shorten >=1pt},
}  

\node (R) at (-3.5*\u,-1*\u) {$\mathcal{R}$};
\node (L) at (6.5*\u,0) {$\mathcal{L}$};

\node (x0) at (-2.5*\u,0) {$x_0$};
\node (xm) at (-1*\u,0) {$x_{j-1}$};
\node (zj) at (0*\u,0) {$z_j$};
\node (xj) at (1*\u,0) {$x_j$};
\node (zjj) at (2*\u,0) {$z_{j+1}$};
\node (xjj) at (3*\u,0) {$x_{j+1}$};
\node (zL) at (4.5*\u,0) {$z_L$};
\node (xL) at (5.5*\u,0) {$x_L$};

\node (bj) at (-0.5*\u,.5*\u) {$b_j$};
\node (bjj) at (1.5*\u,.5*\u) {$b_{j+1}$};
\node (bL) at (4*\u,.5*\u) {$b_{L}$};

\node (r0) at (-2.5*\u,-1*\u) {$\xi_0$};
\node (rm) at (-1*\u,-1*\u) {$\xi_{j-1}$};
\node (dj) at (0*\u,-1*\u) {$\zeta_j$};
\node (rj) at (1*\u,-1*\u) {$\xi_j$};
\node (djj) at (2*\u,-1*\u) {$\zeta_{j+1}$};
\node (rjj) at (3*\u,-1*\u) {$\xi_{j+1}$};
\node (dL) at (4.5*\u,-1*\u) {$\zeta_L$};
\node (xiL) at (5.5*\u,-1*\u) {$\xi_L$};
\node (v) at (6.5*\u,-1*\u) {$v$};

\node (tj) at (-.5*\u,-.5*\u) {$\theta_j$};
\node (tjj) at (1.5*\u,-.5*\u) {$\theta_{j+1}$};

\draw[influences] (xm) -- (zj);	
\draw[influences] (zj) -- (xj);	
\draw[influences] (xj) -- (zjj);
\draw[influences] (zjj) -- (xjj);

\draw[influences] (zj) -- (dj);	
\draw[influences] (zjj) -- (djj);
\draw[influences] (zL) -- (dL);	
\draw[influences] (zL) -- (xL);	
\draw[influences] (xiL) -- (dL);	

\draw[influences] (rjj) -- (djj);
\draw[influences] (djj) -- (rj);
\draw[influences] (rj) -- (dj);
\draw[influences] (dj) -- (rm);

\draw[influences] (tj) -- (zj);
\draw[influences] (tj) -- (rm);
\draw[influences] (tjj) -- (zjj);
\draw[influences] (tjj) -- (rj);

\draw[influences] (bj) -- (zj);
\draw[influences] (bjj) -- (zjj);
\draw[influences] (bL) -- (zL);

\draw[influences] (r0) -- (R);
\draw[influences] (xL) -- (L);
\draw[influences] (v) -- (xiL);

\draw[maybeinfluences] (xL) -- (xiL);
\draw[maybeinfluences] (xL) -- (dL);

\draw[beginning] (xjj) -- ($(xjj) + (.5*\u,0)$);
\draw[middle] ($(xjj) + (.5*\u,0)$) -- ($(xjj) + (1.0*\u,0)$);
\draw[end] ($(xjj) + (1.0*\u,0)$) -- ($(xjj) + (1.4*\u,0)$);

\draw[beginning] (dL) -- ($(dL) - (.5*\u,0)$);
\draw[middle] ($(dL) - (.5*\u,0)$) -- ($(dL) - (1.0*\u,0)$);
\draw[end] ($(dL) - (1.0*\u,0)$) -- ($(dL) - (1.3*\u,0)$);

\draw[beginning] (x0) -- ($(x0) + (.5*\u,0)$);
\draw[middle] ($(x0) + (.5*\u,0)$) -- ($(x0) + (1.0*\u,0)$);
\draw[end] ($(x0) + (1.0*\u,0)$) -- ($(x0) + (1.27*\u,0)$);

\draw[beginning] (rm) -- ($(rm) - (.5*\u,0)$);
\draw[middle] ($(rm) - (.5*\u,0)$) -- ($(rm) - (1.0*\u,0)$);
\draw[end] ($(rm) - (1.0*\u,0)$) -- ($(rm) - (1.4*\u,0)$);

\draw[middle] ($(zL)-(.3*\u,.3*\u)$) -- ($(zL)-(.2*\u,.2*\u)$);
\draw[end] ($(zL)-(.2*\u,.2*\u)$) -- ($(zL)-(.05*\u,.05*\u)$);

\draw[middle] ($(rjj)+(.3*\u,.3*\u)$) -- ($(rjj)+(.2*\u,.2*\u)$);
\draw[end] ($(rjj)+(.2*\u,.2*\u)$) -- ($(rjj)+(.05*\u,.05*\u)$);
\end{tikzpicture}
\caption{Dependency graph of the quantities in the derivative-regularized network according to sections \ref{sec:neural_net_model} and \ref{sec:deriving_double}. An edge from node $A$ to node $B$ signifies $B$ being a function of $A$. This implies that $B$ is also a function of every node that $A$ is a function of etc. The input nodes are $x_0$ and $v$, while the output nodes are $L$ and $R$. Dashed lines symbolize \emph{possible} dependencies, which depends on the exact loss function used. See Appendix \ref{sec:appdx_initial_etaL} for information about this.}\label{fig:dependency_graph}
\end{figure*}
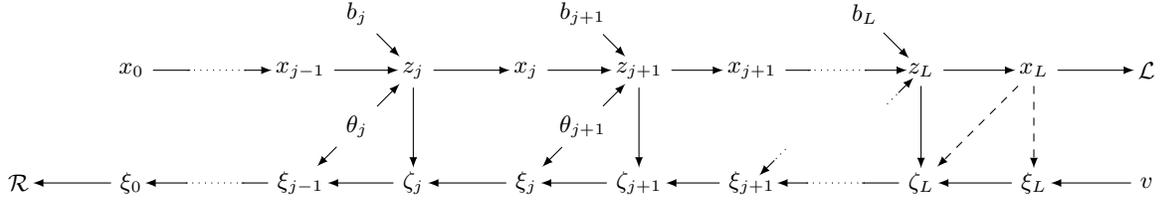

\subsection{Backward-Backward Pass}
Networks that require double backpropagation can be viewed as extended neural networks, where the forward pass (FP) and the backward pass (BP) are concatenated to form the forward pass of a neural network with twice the depth. Through this lens, double backpropagation is nothing but backpropagation through the extended network, where the gradients first pass through the BP of the original network, then the FP or the original network (which was already recognized in \citep{drucker1992improving}). We therefore call the procedures with which we calculate these gradients the \emph{backward-backward pass} and the \emph{forward-backward pass}.

Much like in standard backpropagation, our goal is to calculate $\nabla_{\theta_j}\R$ and $\nabla_{b_j}\R$, while keeping the dependency graph (Fig. \ref{fig:dependency_graph}) in mind. Due to
\begin{equation}\begin{aligned}\label{eq:gradient_formulas}
\nabla_{\theta_j} \R &= \left( \D{\thetaj}{\xi_{j-1}} \right)^\ast \cdot \nabla_{\xi_{j-1}} \R \\
\nabla_{b_j} \R &= \left( \D{b_j}{\zeta_{j}} \right)^\ast \cdot \nabla_{\zeta_{j}} \R
\end{aligned}\end{equation}
we are interested in
\begin{equation}
    \begin{aligned}
        \gvar_{j} :=& \nabla_{\xi_j} \R\\
        \hh_j :=& \nabla_{\zeta_j} \R,
    \end{aligned}
\end{equation}
for which we need backpropagation rules. From equations \eqref{eq:zetaj_formula} and \eqref{eq:xijminusone_formula} we can infer that 
\begin{equation}
\begin{aligned}
\D{\zeta_j}{\xi_{j-1}}&=\left(\D{x_{j-1}}{z_j}\right)^\ast \\
\D{\xi_j}{\zeta_{j}}&=\left(\D{z_{j}}{x_j}\right)^\ast,
\end{aligned}
\end{equation}
so that
\begin{equation}\begin{aligned}
\hh_j =& \nabla_{\zeta_j} \R \\
=& \left( \D{\zeta_j}{\xi_{j-1}} \right)^\ast \cdot \nabla_{\xi_{j-1}} \R \\
=& \hphantom{.} K^T_j(\theta_j,\ph)^\ast \cdot \gvar_{j-1} \\
=&\hphantom{.} K_j(\theta_j, \gvar_{j-1})
\end{aligned}\end{equation}
 and
\begin{equation}\begin{aligned}
\gvar_{j} =& \nabla_{\xi_{j}} \R \\
=& \left( \D{\xi_{j}}{\zeta_j} \right)^\ast \cdot \nabla_{\zeta_j} \R \\
=& \hphantom{.} {G'(z_j)}^\ast \cdot \hh_j \\
=& \hphantom{.} G'(z_j) \cdot \hh_j,
\end{aligned}\end{equation}
which results in the iteration scheme summarized in Algorithm \ref{alg:back_back}.\\

\begin{algorithm}[tb]
   \caption{Calculation of the backwards-backwards terms}\label{alg:back_back}
\begin{algorithmic}
   \STATE {\bfseries Initialize} $\gvar_0= \nabla_{\xi_0} p ( \xi_0)$
   \FOR{$j\gets 1$ {\bfseries to}  $L$}
   \STATE $\hh_j = K_j(\theta_j,\gvar_{j-1})$
   \STATE $\gvar_j = G'(z_j) \cdot \hh_j$
   \ENDFOR
\end{algorithmic}
\end{algorithm}

At this point, we still cannot evaluate equations \eqref{eq:gradient_formulas}. As visible in Figure \ref{fig:dependency_graph}, the linear operators 
$$\D{\thetaj}{\xi_{j-1}} \text{ and }  \D{b_j}{\zeta_j}$$
make us consider the functional relationship between the upper and lower half of the dependency graph. This happens through the forward-backward pass.

\subsection{Fordward-Backward Pass}
We continue to try to evaluate equations \eqref{eq:gradient_formulas}. First off, we note that calculating the bias-gradients 
\begin{equation}\begin{aligned}\label{eq:b_gradient_formula}
\nabla_{b_j} \R &= \left( \D{b_j}{z_{j}} \right)^\ast \left( \D{z_j}{\zeta_{j}} \right)^\ast \cdot \nabla_{\zeta_{j}} \R  \\
 &= id^\ast \cdot \left(\D{z_j}{\zeta_j}\right)^\ast \cdot \hh_j\\
 &=\left(\D{z_j}{\zeta_j}\right)^\ast \cdot \hh_j
\end{aligned}\end{equation}
requires evaluating $$\eta_j:=\left(\D{z_j}{\zeta_j}\right)^\ast \cdot \hh_j.$$ Note that due to $\hh_j=\nabla_{\zeta_{j}} \R$, one has $\eta_j=\nabla_{z_j} \R$. Right now, we do not have a way of evaluating $\eta_j$ yet, but we will derive an expression for it later. \\
Similarly to the bias-gradients, we express the gradients of the linear weights as 
\begin{equation}\begin{aligned}\label{eq:theta_gradient_formula}
\nabla_{\theta_j} \R &= \left( \D{\thetaj}{\xi_{j-1}} \right)^\ast \cdot \nabla_{\xi_{j-1}} \R  \\
 &= \left( \D{\thetaj}{\xi_{j-1}} \right)^\ast \cdot \gvar_{j-1}.  \\
\end{aligned}\end{equation}

Since $\xi_{j-1}=K_j^T(\theta_j,\zeta_j)$ and because $K_j^T$ is a continuous bilinear operator, we can harness the generalized product rule (Theorem \ref{thm:product_rule}) for equation \eqref{eq:theta_gradient_formula}:
\begin{equation}\begin{aligned}
\D{\thetaj}{\xi_{j-1}}^\ast &= \left( K^\transp_j\left(\D{\thetaj}{\thetaj} \ph, \zetaj\right) + K^\transp_j\left(\thetaj, \D{\thetaj}{\zetaj} \ph\right) \right)^\ast\\
 &= K^\transp_j(\ph, \zetaj)^\ast + K^\transp_j\left(\thetaj, \D{\thetaj}{\zetaj} \ph \right)^\ast \\
 &= K^\wup_j(\ph, \zetaj) + \left(\D{\thetaj}{\zetaj} \right)^\ast \cdot K_j\left(\thetaj,  \ph \right), \\
\end{aligned}\end{equation}
where we used the anti-distributivity of adjoint operators. According to Figure \ref{fig:dependency_graph}, $\zeta_j$ only depends on $\theta_j$ through $z_j$. We hence can apply the chain rule
$$\D{\thetaj}{\zetaj}=\D{z_j}{\zeta_j}\D{\thetaj}{z_j}=\D{z_j}{\zeta_j} \cdot K_j(\ph,x_{j-1})$$

Plugging this into equation \eqref{eq:theta_gradient_formula} yields 
\begin{equation}\begin{aligned}
    \nabla_{\theta_j} \R =& K^\wup_j(\gvar_{j-1}, \zetaj)^\ast + \left(\D{\thetaj}{\zetaj} \right)^\ast \cdot K_j\left(\thetaj,  \gvar_{j-1} \right) \\
   =& K^\wup_j(\gvar_{j-1}, \zetaj) +  \left(\D{\thetaj}{z_j} \right)^\ast \left(\D{z_j}{\zeta_j} \right)^\ast \cdot \hh_j\\
   =& K^\wup_j(\gvar_{j-1}, \zetaj) +  K^\wup_j\left(\eta_j,x_{j-1}\right) ,
\end{aligned}\end{equation}
which means that for both $\nabla_{b_j}\R$ and $\nabla_{\theta_j}\R$, 
we need a way of evaluating $\eta_j=\nabla_{z_j}\R$.\\

Here, we may finally make use of the fact that we assumed $g_j$ to be coordinate-wise Fr\'{e}chet differentiable (Def. \ref{def:coordinate_frechet}) for all $j<L$ (with some symmetric, bilinear operator $M_j$). This allows us to write
\begin{equation}\begin{aligned}\label{eq:D-zeta-theta}
\D{z_j}{\zeta_j} =& \D{z_j}{G'_j(\zj)\cdot\xi_j}= \D{z_j}{M_j(g'_j(z_j),\xi_j)} \\
=& M_j\left( \D{z_j}{g'_j(z_j)}  \ph,\xi_j \right) + M_j\left(g'_j(\zj), \D{z_j}{\xi_j} \ph \right)
\end{aligned}\end{equation}

Applying the adjoint
\begin{equation}\begin{aligned}\label{eq:D-zeta-theta-adjoint}
\D{z_j}{\zeta_j}^\ast 
=& G_j''(z_j) \cdot  M_j\left( \ph, \xi_j \right) + \left(\D{z_j}{\xi_j}\right)^\ast  \cdot G'(z_j),
\end{aligned}\end{equation}
with 
$$\left(\D{z_j}{\xi_j}\right)^\ast=\left(\D{z_j}{x_j}\right)^\ast\left(\D{x_j}{\xi_j}\right)^\ast$$
to $\hh_j$ as in equations \eqref{eq:theta_gradient_formula} and \eqref{eq:b_gradient_formula} yields 
\begin{equation}\begin{aligned} \label{eq:eta_formula}
    \eta_j =&M_j\left( G_j''(z_j) \cdot \hh_j,\xi_j \right) + G_j'(z_j) \cdot \nabla_{x_j} \R\\
    =&M_j\left( G_j''(z_j) \cdot \hh_j,\xi_j \right) + G_j'(z_j) \cdot \gamma_j
\end{aligned}\end{equation} 
with
\begin{equation}
    \begin{aligned}
        \gamma_j := \nabla_{x_j} \R .
    \end{aligned}
\end{equation}

While we do not have an expression for $\gamma_j$ yet, we can recursively calculate it via the formula 
\begin{equation}\begin{aligned}\label{eq:gamma_formula}
    \gamma_{j-1}=& \nabla_{x_{j-1}} \R \\
    =& \left( \D{x_{j-1}}{z_j} \right)^\ast \nabla_{z_{j}} \R\\
    =& K_j^T(\theta_j, \eta_j)
\end{aligned}\end{equation}
for which the initial value $\eta_L$ is required, which depends on the exact penalty term (see Appendix \ref{sec:appdx_initial_etaL}). Equipped with this, the weight gradients \begin{equation}
	\begin{aligned}
		\nabla_\thetaj\R =& K^\wup_j(\gvar_{j-1}, \zetaj) + K^\wup_j\left(\eta_j,x_{j-1}\right)\\
		\nabla_{b_j}\R =& \eta_j
	\end{aligned}
\end{equation}
can be calculated. This procedure is summarized in Algorithm \ref{alg:weight_gradients}.\\

\begin{algorithm}[tb]
  \caption{Calculation of the forward-backward pass and the weight-gradients}\label{alg:weight_gradients}
\begin{algorithmic}
   \FOR{$j\gets L$ {\bfseries to}  $1$}
   \IF{$j=L$}
   \STATE {\bfseries Initialize} $\eta_L$ according to Appendix \ref{sec:appdx_initial_etaL}
   \ELSE
   \STATE $\eta_j=M_j\left( G_j''(z_j) \cdot \hh_j,\xi_j \right) + G_j'(z_j) \cdot \gamma_j$
   \ENDIF
   \STATE $\nabla_{\theta_j}\R = K^\wup_j(\gvar_{j-1}, \zetaj) + K^\wup_j\left(\eta_j,x_{j-1}\right)$
   \STATE $\nabla_{b_j}\R = \eta_j$
   \IF{$j>1$}
   \STATE $\gamma_{j-1}=K_j^T(\theta_j, \eta_{j})$
   \ENDIF
   \ENDFOR
\end{algorithmic}
\end{algorithm}

\subsection{Standard Backpropagation}\label{sec:standard_backpropagation}
We can easily recover the standard backpropagation rules (without any derivative-based penalty terms) for the loss $\ell(x_L,y)$ from the above setup by setting $v=\nabla_{x_L} \ell (x_L, y)$. Then 
\begin{equation}\begin{aligned}
\nabla_{\theta_j} \ell(x_L,y) &= \left( \D{\theta_j}{z_j}  \right)^\ast \cdot \nabla_{z_j} \ell(x_L,y) \\
&= \left( \D{\theta_j}{K_j(\theta_j,x_{j-1}) + b_j}  \right)^\ast \cdot \zeta_j \\
&= \left( K_j(\ph,x_{j-1}) \right)^\ast \cdot \zeta_j \\
&= K^\wup_j (\zeta_j,x_{j-1})
\end{aligned}\end{equation}
and
\begin{equation}\begin{aligned}
\nabla_{b_j} \ell(x_L,y) &= \left( \D{b_j}{z_j}  \right)^\ast \cdot \nabla_{z_j} \ell(x_L,y) \\
&= \left( \D{b_j}{K_j(\theta_j,x_{j-1}) + b_j}  \right)^\ast \cdot \zeta_j \\
&= id^\ast \cdot \zeta_j \\
&= \zeta_j,
\end{aligned}\end{equation}
which provide the well-known weight-gradients, that are needed for each iteration of a first-order optimization scheme of the network's loss, in the general framework of continuous bilinear operators.\\
This also demonstrates that one is able to 'reuse' the values $\zeta_i$ and $\xi_i$ for standard backpropagation \emph{and} for classical double backpropagation, unlike for all other penalty terms.

\section{Runtimes}
In the last section, the double backpropagation rules were derived. For most networks (in particular in convolutional neural networks), the most time-consuming portion of the network lies in the calculation of the forward and transposed operators $K_j$ and $K^T_j$. Here, we will consider the runtimes of different penalty terms and offer optimized implementations of some.
\subsection{The General Case}
In the general case, the forward, backward and backward-backward pass each require $L$ evaluations of the (transposed) operators. The forward-backward pass however does not require to evaluate $\gamma_0=K^T_1(\theta_1,\eta_1)$ , which is why in this case only $L-1$ transposed operations need to be performed. This results in a time complexity of $4L-1$ operations for the full double backpropagation.\newline
If the full loss term is $\L+\lambda p(\xi_0)$, but $\xi_0$ is \emph{not} $\nabla_{x_0}\L$ (as in classical double backpropagation), one needs to perform another $L-1$ operations, because $\nabla_{\theta_j}\L$ and $\nabla_{b_j}\L$ are needed, whereas some values can be reused in classical double backpropagation (as detailed in section \ref{sec:standard_backpropagation}). In summary, these cases require $5L-2$ linear operations, compared to the $2L-1$ operations of a network without a penalty term of the type \eqref{eq:penalty_general_form}.\\

\subsection{Locally Linear Activation Functions}
If the activations $x_j=g_j(z_j)$ are not only coordinate-wise twice Fr\'{e}chet differentiable in $z_j$, but also locally linear (as with the popular (leaky) rectified linear unit \emph{ReLU}) in $z_j$, the double backpropagation takes a simpler form: \newline
As $G''(z_j)$ is the null operator (\emph{almost everywhere}, for every $z_j$ for which $g_j(z_j)$ is twice Fr\'{e}chet differentiable) and because $M_j$ is linear in both arguments, equation \eqref{eq:eta_formula} reduces to 
\begin{equation}
    \eta_j = G'_j(z_j) \cdot \gamma_j.
\end{equation}
In general, this however does not reduce the amount of linear operations $K_j$ and $K_j^T$.

\subsection{Linear Output Nodes and Locally Linear Activation Functions}
If the penalty terms are applied on the derivatives of linear output nodes (i.e. $g_L=\text{id}$, for example when $\R=\|\nabla_{x_0}z_L^i\|_{\X_0}^2$), then $\eta_L=0$, as demonstrated in Appendix \ref{sec:appdx_initial_etaL}. While this only reduces the amount of linear operations by 1 (through $\gamma_{L-1}=K_L^T(\theta_j,0)=0$), the effect cascades when also a locally linear activation function is used. This is because in that case, $\eta_j=\gamma_j=0$ for all $j$, according to equations \eqref{eq:eta_formula} and \eqref{eq:gamma_formula}. As a result, the weight gradients reduce to $\nabla_{b_j}\R=0$ for all $j$ and $\nabla_{\theta_j}\R=K^\wup_j(\gvar_{j-1},\zeta_j)$, which means that one does not need to perform the forward-backward pass at all. All in all, the reduced number of linear operations for the penalty term is then $3L$, and $4L-1$ for the full loss term $\L+\lambda \R$ (the same as for the classical double backpropagation loss $L+\lambda p(\nabla_{x_0}\L)$).

\subsection{Jacobian Penalties}
If the penalty term is $$\R=\sum\limits_{i=1}^C \R^\ith :\equiv \sum\limits_{i=1}^C \|\nabla_{x_0} x_L^i\|_2^2$$ (equivalent to the squared Frobenius norm of the Jacobian), the double backpropagation scheme needs to be performed $C$ times. Note that the forward pass has to be performed only once. The total number of linear operations is thus $L+C(3L-1)$, plus another $L-1$ if one needs $\nabla_{\theta_j}\L$ and $\nabla_{b_j}\L$ (which results in $2L-1+C(3L-1)$ linear operations). \\

We now present an optimized double backpropagation algorithm for this scenario, that applies if only locally linear activation functions like ReLU (up to the final softmax layer) are employed, which allows one to abuse a certain linearity. This algorithm reduces the number of performed linear operations by almost a third, while keeping the required memory roughly the same as with a single conventional double backpropagation. We will now index variables such as $\eta_j$ that relate to a certain $\R^\ith$ via $\eta_j^\ith$.
We note the following:
\begin{enumerate}
    \item We can write $\nabla_\thetaj\R = \hat{\theta}^1_j + \hat{\theta}^2_j,$ where $\hat{\theta}^1_j=\sum\limits_{i=1}^C K^\wup_j(\gvar^\ith_{j-1}, \zetaj^\ith)$ and $\hat{\theta}^2_j= \sum\limits_{i=1}^C K^\wup_j\left(\eta^\ith_j,x_{j-1}\right)$.
    \item When looping over $i$, $\hat{\theta}^1_j$ can be calculated by an update scheme via 'initializing' $\hat{\theta}^1_j$ as 0 and adding $K^\wup_j(\gvar^\ith_{j-1}, \zetaj^\ith)$ after every backward-backward pass. This way, only the accumulated $\hat{\theta}^1_j$ needs to be kept in memory, compared to every summand.
    \item $K_j^\wup(\ph,x_{j-1})$ is linear, such that $$\Sigma_i K_j^\wup(\eta_j^\ith,x_{j-1}) =  K_j^\wup(\Sigma_i\eta_j^\ith,x_{j-1}),$$ which means that we can accumulate $\hat{\eta}_j=\Sigma_i \eta_j^\ith$.
    \item $\hat{\eta}_L$ can be calculated (similarly to $\hat{\theta}^1_j$) via initialization as 0 and updating after every backward-backward pass by adding $\hat{\eta}^\ith_L$.
    \item Since $G''_j(z_j)$ is the null operator, $\eta_j^\ith$ depends \emph{linearly} on $\gamma_j^\ith$. As a result, $$\hat{\eta}_j = G'_j(z_j) \cdot \left( \Sigma_i \gamma_i \right)=G'_j(z_j) \cdot \hat{\gamma}_j$$ with $\hat{\gamma}_j = \Sigma_i \gamma_j^\ith$.
    \item $\hat{\eta}_j$ and consequently $\hat{\theta}^2_j=K_j^\wup(\hat{\eta}_j,x_{j-1})$ are linear in $\hat{\eta}_L$ and can be calculated recursively from $\hat{\eta}_L$. This way, only one forward-backward pass needs to be performed, compared to the $C$ passes that normally need to be performed.
    \item By erasing variables from memory once they are no longer needed, this optimized algorithm does not require more memory than a single conventional double backpropagation procedure.
\end{enumerate}
We end up with $2L-1 + 2CL$ linear operations ($L$ for the forward pass, $L$ for each of the $C$ backward and backward-backward passes and $L-1$ for the forward-backward pass). As the na\"{i}ve implementation requires $L+3CL-C$ linear operations, about a third of the linear operations are saved (because $2CL$ respectively $3CL$ represent the bulk of the operations).
The algorithm is presented in detail in Algorithm \ref{alg:frobenius_jacobian_optimized} in Appendix \ref{sec:appdx_algorithm}.

\section{Loss Landscapes for (leaky) ReLU networks}
\begin{figure*}[!t]
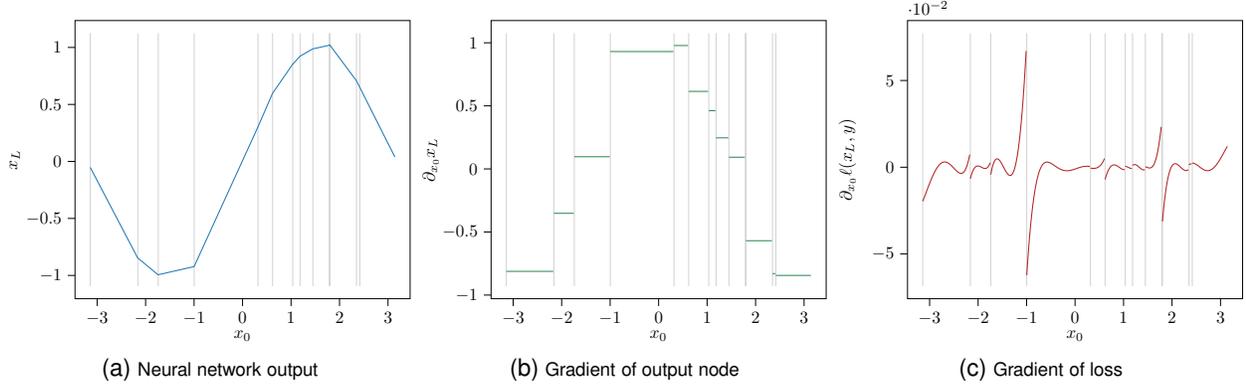

\centering
\subfloat[\scriptsize Neural network output]{\input{images/approximation}%
\label{fig:approximation}}
\subfloat[\scriptsize Gradient of output node]{\input{images/gradients}%
\label{fig:gradients}}
\subfloat[\scriptsize Gradient of loss]{\input{images/db_gradients}%
\label{fig:db_gradients}}
\caption{Neural network trained to approximate a sine-curve. Vertical grey lines symbolize the boundaries of locally affine regions.}
\label{fig_sim}
\end{figure*}
In the following, we will only consider finite-dimensional networks. The loss landscapes of (leaky) ReLU networks represent special cases due to jump discontinuities in their derivatives.
\subsection{Loss Landscape in the Inputs}
(Leaky) ReLU networks partition the input space into convex polytopes in which the logit layer $z_L$ is affine in $x_0$ \citep{raghu2017expressive}. This in turn means that the operator
\begin{equation}\label{eq:end_operator}
    \D{x_0}{z_L}
\end{equation}
is constant in $x_0$ in the interior of each convex polytope and in turn locally constant almost everywhere\footnote{A technical remark: The points of non-differentiability lie on the boundary of these convex polytopes. There are extensions of (Fr\'{e}chet) differentiability such as different types of \emph{subgradients and -differentials} that may apply to points on the boundary. See \citep{mordukhovich2006variational} for an overview of the many different concepts of subgradients. A full theoretical subdifferential treatment however is far out of the scope of this paper. Since the boundaries of the polytopes form a null set, we do not consider these points here.}. Since the penalty term is given by $$\R=p\left( \left( \D{x_0}{z_L}\right)^\ast\cdot \zeta_L \right),$$ this implies that $\R$ is locally constant in $x_0$ within each convex polytope if (and only if) $\zeta_L$ is locally constant in this region as well. Nevertheless, a jump discontinuity may occur when for some $z_j$, the activation $x_j=g_j(z_j)$ enters a different locally linear region of the (leaky) ReLU nonlinearity. This is the case when an entry of the vector $z_j$ switches between $(-\infty,0]$ and $(0,\infty)$.

\subsection{Loss Landscape in the Parameters}\label{sec:landscape_parameters}
The above considerations lead to the question, whether $\R$ may also be locally constant almost everywhere in the parameter space. If that were the case, any derivative-based optimization algorithm (like stochastic gradient descent) would instantly fail, because then automatically $\nabla_{\Theta}{\R}= 0$. However, for fixed $x_0$ in the interior of a convex polytope, the operator \eqref{eq:end_operator} is locally affine in the linear weights $\theta_j$ and locally constant in the biases $b_j$.  As $z_j$ depends (locally affine) on $\theta_k$ and (locally constant) on $b_k$ (for $k\leq j$), this means that $\R$ is luckily not locally constant almost everywhere in $\Theta$. The exact functional dependence then hinges on whether and how $\zeta_L$ depends on the weights.\newline
However, $\R$ thus also 'inherits' the jump discontinuities from $\D{x_0}{z_L}$, which may introduce numerical problems when using derivative-based optimization.\\

In reality, the problem of jump discontinuities may however be not as severe as it may seem at first glance: Usually, the optimization methods are applied not on a penalty term for a single point $x_0$ (with label $y$), but on the average value of $\R$ for a whole batch $\{(x_0^{(i)},y^{(i)})\}_{i=1,\dots,M}$. While the number of jump discontinuities adds up over the number of samples in this batch, the averaging process introduces a 'smoothing' effect on the loss landscape. These phenomena are empirically demonstrated on a simple toy example.

\subsection{Experiments}
For the following extremely simple toy example, we created a dataset of 1500 points $\lbrace (x_0^{(i)},y^{(i)})\rbrace_{i=1,\dots,1500}$, where $x_0^{(i)}\in\left[ -\pi, \pi \right]$ and $y^{(i)}=\sin(x^{(i)})$. We then fitted a small multilayer perceptron with 2 hidden ReLU layers (with 8 respectively 5 neurons) and a linear output layer (with 1 neuron and $g_L=\text{id}$) to this dataset, using the squared loss.\\
We start by considering the loss landscape in the inputs. In Figure \ref{fig:approximation}, the resulting approximating neural network is compared to the actual sine-curve. As expected, the neural network creates a locally affine, continuous output. Since the network maps real numbers to real numbers, we can identify the operator \eqref{eq:end_operator} with the partial derivative $\partial_{x_0} x_L=\partial_{x_0} z_L \in \mathbb{R}$. As displayed in Figure \ref{fig:gradients}, this derivative exhibits locally constant regions separated by the locations of non-differentiability. As a consequence, any penalty term $$\R_\text{node} := p \left(\partial_{x_0} z_L\right)$$ for some a.e. differentiable $p$ would necessarily be locally constant in $x_0$ as well (not depicted here). The loss landscape in $x_0$ for the classical double backpropagation penalty $$\R_\text{cdb} := \norm{\partial_{x_0} \ell (x_L,y)}_2^2 = \left(\left(\partial_{x_0} (x_L-y)^2\right)\right)^2$$
is depicted in Figure \ref{fig:db_gradients} and shows the expected jump discontinuities.


Since even a neural network as small as this one has 61 parameters, one cannot feasibly depict the loss landscape over \emph{all} parameters. Therefore, we fix the weights of the trained network and vary only one parameter of each the weight matrix and bias vector of the second hidden layer. We will call these parameters $w$ and $b$. In Figure \ref{fig:single_gradient_wb}, for fixed $x_0\approx 1.022$, the dependence of $s:=\partial_{x_0} z_L \in \mathbb{R}$ on $w$ respectively $b$ is shown. As predicted in section \ref{sec:landscape_parameters}, $s$ is locally affine in $w$ and exhibits jump discontinuities. Furthermore, $s$ as a function of $b$ is locally constant and exhibits jump discontinuities.\\

\begin{figure}
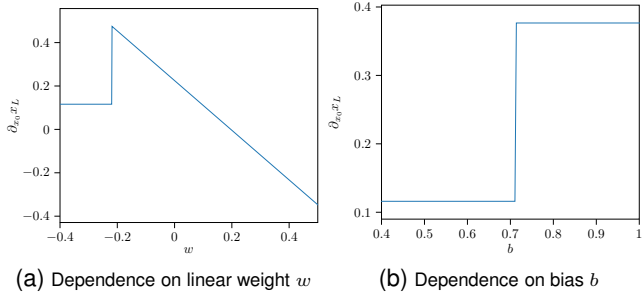

\centering
\subfloat[\scriptsize Dependence on linear weight $w$]{\input{images/single_gradient_w.tex}%
\label{fig:single_gradient_w}}
\subfloat[\scriptsize Dependence on bias $b$]{\input{images/single_gradient_b.tex}%
\label{fig:single_gradient_b}}
\caption{Dependence of $\partial_{x_0} z_L$ on $w$ and $b$.}\label{fig:single_gradient_wb}
\end{figure}

For the actual optimization, the properties of interests are the derivatives of the \emph{penalty terms}. For $\R_\text{node}$, we choose $p: s \mapsto s^2$ and visualize $\partial_w \R_\text{node}$ and $\partial_b \R_\text{node}$ in Figure \ref{fig:single_dRnode_dwb}. While $\partial_w \R_\text{node}$ exhibits a piecewise linear behavior (including a locally constant portion) with a jump discontinuity, $\partial_b \R_\text{node}$ is constant 0 (as a consequence of $\R_\text{node}$ being locally constant due to $g_L=\text{id}$, as explained in section \ref{sec:landscape_parameters}). This demonstrates how first-order optimization of $\R_\text{node}$ for a single example $x_0$ may suffer from instabilities, whenever a neuron switches between the locally linear regions of the (leaky) ReLU nonlinearity.\newline
We perform a similar analysis for the classical backpropagation penalty $\R_\text{cdb}$ and display our results in Figure \ref{fig:single_dRcdb_dwb}. While the jump discontinuities appear in the same spots, the non-constant portion of $\partial_w \R_\text{cdb}$ exhibits nonlinear behavior due to the (nonlinear) choice of $p$ and the dependence of $\xi_0$ on $w$. A central difference in the behavior of the bias derivative $\partial_b \R_\text{cdb}$ compared to $\partial_b \R_\text{node}$ lies in the fact that this derivative is \emph{not} constant 0. This is because classical double backpropagation in general yields $\eta_L \neq 0$.\\
\begin{figure}
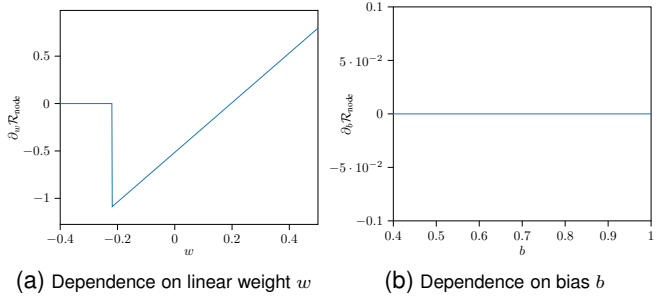

\centering
\subfloat[\scriptsize Dependence on linear weight $w$]{\input{images/single_dRnode_dw.tex}%
\label{fig:single_dRnode_dw}}
\subfloat[\scriptsize Dependence on bias $b$]{\input{images/single_dRnode_db.tex}%
\label{fig:single_dRnode_db}}
\caption{Derivatives of the penalty term $\R_\text{node}$ with respect to $w$ and $b$.}\label{fig:single_dRnode_dwb}
\end{figure}
\begin{figure}
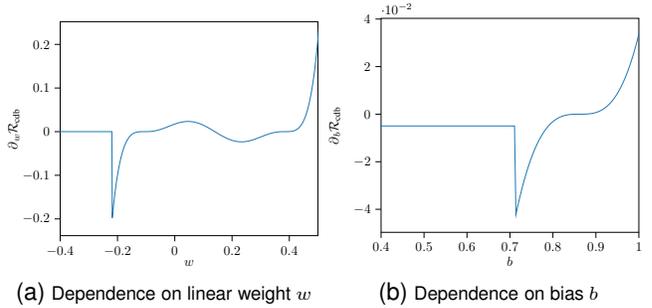

\centering
\subfloat[\scriptsize Dependence on linear weight $w$]{\input{images/single_dRcdb_dw.tex}%
\label{fig:single_dRcdb_dw}}
\subfloat[\scriptsize Dependence on bias $b$]{\input{images/single_dRcdb_db.tex}%
\label{fig:single_dRcdb_db}}
\caption{Derivatives of the penalty term $\R_\text{cdb}$ on $w$ and $b$.}\label{fig:single_dRcdb_dwb}
\end{figure}

As we will show now, the feared instabilities can be reduced by batchwise optimization, which is standard practice. To visualize this, we randomly pick a batch $\mathfrak{B}=\lbrace(x_0^{(i)},y^{(i)}))\rbrace_{i=1,\dots,M}$ for $x^{(i)}\in \left[-\pi, \pi \right]$, $y^{(i)}=\sin (x^{(i)})$ with batch size $M=256$ and visualize the averaged penalty terms $$\R^\mathfrak{B}_\text{node} := \frac{1}{M} \sum\limits_{i=1}^{M} \left( \partial_{x_0^{(i)}} x_L^{(i)} \right)^2$$
and
$$\R^\mathfrak{B}_\text{cdb} := \frac{1}{M} \sum\limits_{i=1}^{M} \norm{\partial_{x^{(i)}_0} \ell (x^{(i)}_L, y^{(i)})}_2^2$$
in Figures \ref{fig:averaged_dRnode_dwb} and \ref{fig:averaged_dRcdb_dwb}.
While jump discontinuities of the averaged penalty terms are still visible, the fact that the individual discontinuities lie close together in parameter space creates the effect of 'almost smooth' loss landscapes. Due to this smoothing effect, the optimization using batch optimization is much less impaired by the discontinuities than for a single example $x_0$, which explains their success in the applications listed in section \ref{sec:examples}, even when using (leaky) ReLU activation functions.

\begin{figure}[t]
\centering
\subfloat[\scriptsize Dependence on linear weight $w$]{\input{images/averaged_dRnode_dw.tex}%
\label{fig:averaged_dRnode_dw}}
\subfloat[\scriptsize Dependence on bias $b$]{\input{images/averaged_dRnode_db.tex}%
\label{fig:averaged_dRnode_db}}
\caption{Derivatives of the penalty term $\R^\mathfrak{B}_\text{node}$ on $w$ and $b$. The averaging over the batch $\mathfrak{B}$ creates a 'smoothed' landscape over $w$.}\label{fig:averaged_dRnode_dwb}
\end{figure}

\begin{figure}
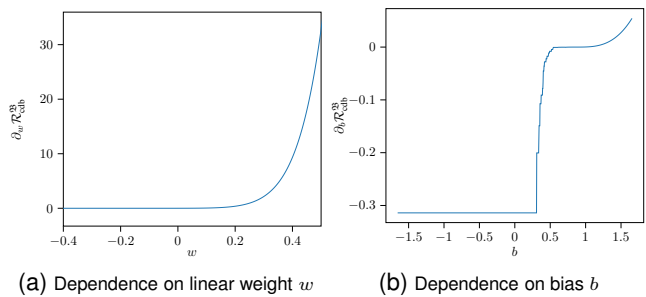

\centering
\subfloat[\scriptsize Dependence on linear weight $w$]{\input{images/averaged_dRcdb_dw.tex}%
\label{fig:averaged_dRcdb_dw}}
\subfloat[\scriptsize Dependence on bias $b$]{\input{images/averaged_dRcdb_db.tex}%
\label{fig:averaged_dRcdb_db}}
\caption{Derivatives of the penalty term $\R^\mathfrak{B}_\text{cdb}$ on $w$ and $b$. The averaging over the batch $\mathfrak{B}$ creates a 'smoothed' landscape.}\label{fig:averaged_dRcdb_dwb}
\end{figure}

\section{Conclusion}
In this paper, we provided a first in-depth description of 'double backpropagation' procedures, which come into play whenever a loss function contains derivatives of output nodes with respect to input nodes. We offer a unified perspective for a large class of such loss functions and describe the derivatives in the general framework of Fr\'{e}chet derivatives on Hilbert spaces. For this, we developed a theory of adjoint operators for continuous, bilinear operators, which covers many common layer types. The obtained description of the involved derivatives allows us to present optimized double backpropagation schemes for some networks, which reduces the time complexity by roughly a third in this case. Furthermore, we provided a description for the (discontinuous) loss landscape for derivative-based losses of (leaky) ReLU networks both in the inputs as well as the parameters. We further demonstrate that training in batches introduces a 'pseudo-smoothing' effect to the loss landscape, which results in higher numerical stability of the training procedure.

\bibliographystyle{plainnat}
\bibliography{referencesDB}

\begin{thebibliography}{21}
\providecommand{\natexlab}[1]{#1}
\providecommand{\url}[1]{\texttt{#1}}
\expandafter\ifx\csname urlstyle\endcsname\relax
  \providecommand{\doi}[1]{doi: #1}\else
  \providecommand{\doi}{doi: \begingroup \urlstyle{rm}\Url}\fi

\bibitem[Abadi et~al.(2016)Abadi, Barham, Chen, Chen, Davis, Dean, Devin,
  Ghemawat, Irving, Isard, et~al.]{abadi2016tensorflow}
Mart{\'\i}n Abadi, Paul Barham, Jianmin Chen, Zhifeng Chen, Andy Davis, Jeffrey
  Dean, Matthieu Devin, Sanjay Ghemawat, Geoffrey Irving, Michael Isard, et~al.
\newblock Tensorflow: A system for large-scale machine learning.
\newblock In \emph{12th $\{$USENIX$\}$ Symposium on Operating Systems Design
  and Implementation ($\{$OSDI$\}$ 16)}, pages 265--283, 2016.

\bibitem[Anil et~al.(2018)Anil, Lucas, and Grosse]{anil2018sorting}
Cem Anil, James Lucas, and Roger Grosse.
\newblock Sorting out lipschitz function approximation.
\newblock \emph{arXiv preprint arXiv:1811.05381}, 2018.

\bibitem[Behrmann et~al.(2019)Behrmann, Grathwohl, Chen, Duvenaud, and
  Jacobsen]{pmlr-v97-behrmann19a}
Jens Behrmann, Will Grathwohl, Ricky T.~Q. Chen, David Duvenaud, and
  Joern-Henrik Jacobsen.
\newblock Invertible residual networks.
\newblock In Kamalika Chaudhuri and Ruslan Salakhutdinov, editors,
  \emph{Proceedings of the 36th International Conference on Machine Learning},
  volume~97 of \emph{Proceedings of Machine Learning Research}, pages 573--582,
  Long Beach, California, USA, 09--15 Jun 2019. PMLR.

\bibitem[Bruna and Mallat(2013)]{bruna2013invariant}
Joan Bruna and St{\'e}phane Mallat.
\newblock Invariant scattering convolution networks.
\newblock \emph{IEEE transactions on pattern analysis and machine
  intelligence}, 35\penalty0 (8):\penalty0 1872--1886, 2013.

\bibitem[Czarnecki et~al.(2017)Czarnecki, Osindero, Jaderberg, Swirszcz, and
  Pascanu]{czarnecki2017sobolev}
Wojciech~M Czarnecki, Simon Osindero, Max Jaderberg, Grzegorz Swirszcz, and
  Razvan Pascanu.
\newblock Sobolev training for neural networks.
\newblock In \emph{Advances in Neural Information Processing Systems}, pages
  4278--4287, 2017.

\bibitem[Drucker and Le~Cun(1992)]{drucker1992improving}
Harris Drucker and Yann Le~Cun.
\newblock Improving generalization performance using double backpropagation.
\newblock \emph{IEEE Transactions on Neural Networks}, 3\penalty0 (6):\penalty0
  991--997, 1992.

\bibitem[Grathwohl et~al.(2018)Grathwohl, Chen, Betterncourt, Sutskever, and
  Duvenaud]{grathwohl2018ffjord}
Will Grathwohl, Ricky~TQ Chen, Jesse Betterncourt, Ilya Sutskever, and David
  Duvenaud.
\newblock Ffjord: Free-form continuous dynamics for scalable reversible
  generative models.
\newblock \emph{arXiv preprint arXiv:1810.01367}, 2018.

\bibitem[Kingma and Dhariwal(2018)]{kingma2018glow}
Durk~P Kingma and Prafulla Dhariwal.
\newblock Glow: Generative flow with invertible 1x1 convolutions.
\newblock In \emph{Advances in Neural Information Processing Systems}, pages
  10215--10224, 2018.

\bibitem[Lunz et~al.(2018)Lunz, {\"O}ktem, and
  Sch{\"o}nlieb]{lunz2018adversarial}
Sebastian Lunz, Ozan {\"O}ktem, and Carola-Bibiane Sch{\"o}nlieb.
\newblock Adversarial regularizers in inverse problems.
\newblock In \emph{Advances in Neural Information Processing Systems}, pages
  8507--8516, 2018.

\bibitem[Mordukhovich(2006)]{mordukhovich2006variational}
Boris~S Mordukhovich.
\newblock \emph{Variational analysis and generalized differentiation I: Basic
  theory}, volume 330.
\newblock Springer Science \& Business Media, 2006.

\bibitem[Muller(1959)]{muller1959note}
Mervin~E Muller.
\newblock A note on a method for generating points uniformly on n-dimensional
  spheres.
\newblock \emph{Communications of the ACM}, 2\penalty0 (4):\penalty0 19--20,
  1959.

\bibitem[Nair and Hinton(2010)]{nair2010rectified}
Vinod Nair and Geoffrey~E Hinton.
\newblock Rectified linear units improve restricted boltzmann machines.
\newblock In \emph{Proceedings of the 27th international conference on machine
  learning (ICML-10)}, pages 807--814, 2010.

\bibitem[Raghu et~al.(2017)Raghu, Poole, Kleinberg, Ganguli, and
  Dickstein]{raghu2017expressive}
Maithra Raghu, Ben Poole, Jon Kleinberg, Surya Ganguli, and Jascha~Sohl
  Dickstein.
\newblock On the expressive power of deep neural networks.
\newblock In \emph{Proceedings of the 34th International Conference on Machine
  Learning-Volume 70}, pages 2847--2854. JMLR. org, 2017.

\bibitem[Rezende and Mohamed(2015)]{rezende2015variational}
Danilo~Jimenez Rezende and Shakir Mohamed.
\newblock Variational inference with normalizing flows.
\newblock \emph{arXiv preprint arXiv:1505.05770}, 2015.

\bibitem[Rifai et~al.(2011)Rifai, Vincent, Muller, Glorot, and
  Bengio]{rifai2011contractive}
Salah Rifai, Pascal Vincent, Xavier Muller, Xavier Glorot, and Yoshua Bengio.
\newblock Contractive auto-encoders: Explicit invariance during feature
  extraction.
\newblock In \emph{Proceedings of the 28th International Conference on
  International Conference on Machine Learning}, pages 833--840. Omnipress,
  2011.

\bibitem[Roth et~al.(2017)Roth, Lucchi, Nowozin, and
  Hofmann]{roth2017stabilizing}
Kevin Roth, Aurelien Lucchi, Sebastian Nowozin, and Thomas Hofmann.
\newblock Stabilizing training of generative adversarial networks through
  regularization.
\newblock In \emph{Advances in neural information processing systems}, pages
  2018--2028, 2017.

\bibitem[Rudin(1991)]{rudin1991functional}
W.~Rudin.
\newblock \emph{Functional Analysis}.
\newblock International series in pure and applied mathematics. McGraw-Hill,
  1991.
\newblock ISBN 9780070542365.

\bibitem[Schechter(1996)]{schechter1996handbook}
Eric Schechter.
\newblock \emph{Handbook of Analysis and its Foundations}.
\newblock Academic Press, 1996.

\bibitem[Simon-Gabriel et~al.(2019)Simon-Gabriel, Ollivier, Bottou,
  Sch{\"o}lkopf, and Lopez-Paz]{pmlr-v97-simon-gabriel19a}
Carl-Johann Simon-Gabriel, Yann Ollivier, Leon Bottou, Bernhard Sch{\"o}lkopf,
  and David Lopez-Paz.
\newblock First-order adversarial vulnerability of neural networks and input
  dimension.
\newblock In Kamalika Chaudhuri and Ruslan Salakhutdinov, editors,
  \emph{Proceedings of the 36th International Conference on Machine Learning},
  volume~97 of \emph{Proceedings of Machine Learning Research}, pages
  5809--5817, Long Beach, California, USA, 09--15 Jun 2019. PMLR.

\bibitem[Sokoli{\'c} et~al.(2017)Sokoli{\'c}, Giryes, Sapiro, and
  Rodrigues]{sokolic2017robust}
Jure Sokoli{\'c}, Raja Giryes, Guillermo Sapiro, and Miguel~RD Rodrigues.
\newblock Robust large margin deep neural networks.
\newblock \emph{IEEE Transactions on Signal Processing}, 65\penalty0
  (16):\penalty0 4265--4280, 2017.

\bibitem[Wiatowski and B{\"o}lcskei(2017)]{wiatowski2017mathematical}
Thomas Wiatowski and Helmut B{\"o}lcskei.
\newblock A mathematical theory of deep convolutional neural networks for
  feature extraction.
\newblock \emph{IEEE Transactions on Information Theory}, 64\penalty0
  (3):\penalty0 1845--1866, 2017.

\end{thebibliography}
\newpage
\mbox{~}
\clearpage
\appendix
\section*{Appendix}
\section{Proofs for Bilinear Operator Theory}\label{sec:appdx_bilinear}
In order for our proofs to work, we need a very simple corollary.
\begin{corollary}\label{col:zero-property}
Let $U,V,W$ be real Hilbert-spaces. Let further $A: U \times \mathcal{V} \rightarrow \mathcal{W}$ and $B: U \times \mathcal{V} \rightarrow \mathcal{W}$ be continuous bilinear operators. Then $A=B$ if and only if
$$\langle A(u,v),w \rangle_\mathcal{W} = \langle B(u,v),w \rangle_\mathcal{W}$$
for all $(u,v,w) \in U\times\mathcal{V}\times\mathcal{W}$. 
\end{corollary}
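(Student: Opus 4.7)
The statement is a two-direction equivalence, and each direction reduces to a single Hilbert-space argument. The plan is to dispense with the easy direction first and then reduce the harder one to the standard characterisation of the zero vector in a Hilbert space via its inner product.

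For the forward implication ($A=B \Rightarrow$ inner products agree), nothing is needed: if $A(u,v)$ and $B(u,v)$ denote the same element of $\mathcal{W}$, their inner products against any $w\in\mathcal{W}$ are trivially equal.

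For the reverse implication ($\Leftarrow$), I would fix an arbitrary pair $(u,v)\in U\times\mathcal{V}$ and set $z := A(u,v)-B(u,v)\in\mathcal{W}$. By linearity of the inner product in the first argument, the hypothesis rewrites as $\langle z, w\rangle_\mathcal{W} = 0$ for every $w\in\mathcal{W}$. Specialising to $w=z$ then yields $\|z\|_\mathcal{W}^2 = \langle z,z\rangle_\mathcal{W} = 0$, and the positive definiteness of the inner product forces $z=0$, i.e. $A(u,v)=B(u,v)$. Since $(u,v)$ was arbitrary, this gives $A=B$ as operators.

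There is no real obstacle here: the bilinearity and continuity hypotheses on $A$ and $B$ are not invoked in the argument, they simply provide the framework in which the corollary will later be applied (in particular when proving Theorem~\ref{thm:adjoint-commutativity}, where one shows two bilinear adjoints coincide by verifying they agree in all inner products). The whole proof is thus essentially the one-line observation that an element of a Hilbert space is determined by its inner products against every other element.
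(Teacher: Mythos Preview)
Your proof is correct and follows exactly the same approach as the paper: show the forward direction is trivial, and for the converse set $z=A(u,v)-B(u,v)$, test against $w=z$, and use positive definiteness of the inner product. Your remark that bilinearity and continuity are not actually used in the argument is also accurate.
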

\begin{proof}
The '$\Rightarrow$'-direction follows immediately. We now prove the converse direction.\newline
Let $$\langle A(u,v),w \rangle_\mathcal{W} = \langle B(u,v),w \rangle_\mathcal{W}$$
for all $(u,v,w) \in U\times\mathcal{V}\times\mathcal{W}$. Then
$$\langle \left[ A - B \right] (u,v),w \rangle_\mathcal{W} = 0,$$
in particular for $w = \left[ A - B \right] (u,v)$, so that 
$$\langle \left[ A - B \right] (u,v), \left[ A - B \right] (u,v) \rangle_\mathcal{W}=0,$$
which means that (due to the definiteness of inner products), $$\left[ A - B \right] (u,v)=0$$ for all $u\in U$ and $v \in \mathcal{V}$. In other words, $A-B$ is the null operator, or $A=B$.
\end{proof}

We can now prove the following lemma:
\begin{lemma}
$K^\transp$ and $K^\wup$ are bilinear operators.
\end{lemma}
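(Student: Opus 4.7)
The goal is to verify that each of the two maps $K^\transp$ and $K^\wup$ is linear in each of its two arguments. Linearity in the second argument ($y$) is immediate in both cases, because by construction the partial maps $K^\transp(\theta,\ph)$ and $K^\wup(\ph,x)$ are adjoints of continuous linear operators between real Hilbert spaces, hence are themselves linear. So the only real content is linearity in the first argument, i.e.\ in $\theta$ for $K^\transp$ and in $x$ for $K^\wup$.

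The natural way to get this is via the defining identities
$$\langle K(\theta,x), y \rangle_\Y = \langle x, K^\transp(\theta,y) \rangle_\X
  \quad\text{and}\quad
  \langle K(\theta,x), y \rangle_\Y = \langle \theta, K^\wup(x,y) \rangle_\P,$$
combined with the bilinearity of $K$ itself. Concretely, for $\alpha,\beta\in\mathbb{R}$ and $\theta_1,\theta_2\in\P$, I would compute, for arbitrary $x\in\X$ and $y\in\Y$,
\begin{align*}
\langle x, K^\transp(\alpha\theta_1+\beta\theta_2,y)\rangle_\X
 &= \langle K(\alpha\theta_1+\beta\theta_2,x), y\rangle_\Y \\
 &= \alpha\langle K(\theta_1,x),y\rangle_\Y + \beta\langle K(\theta_2,x),y\rangle_\Y \\
 &= \langle x,\, \alpha K^\transp(\theta_1,y)+\beta K^\transp(\theta_2,y)\rangle_\X,
\end{align*}
using bilinearity of $K$ in the middle step and the adjoint identity at the ends. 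Since the resulting equality holds for every $x\in\X$, the two vectors inside the inner product must coincide. (This can be obtained either as a direct consequence of Corollary \ref{col:zero-property} applied to the constant bilinear maps in $x$, or, more elementarily, by plugging in $x$ equal to the difference of the two candidate vectors and invoking definiteness of the inner product.) This yields linearity of $K^\transp$ in $\theta$, and together with linearity in $y$ gives bilinearity.

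The argument for $K^\wup$ is structurally identical, just with the roles swapped: one expands $\langle\theta, K^\wup(\alpha x_1+\beta x_2, y)\rangle_\P = \langle K(\theta,\alpha x_1+\beta x_2),y\rangle_\Y$, uses bilinearity of $K$ in its second slot, and applies the same separation-of-variables argument over arbitrary $\theta\in\P$. I do not anticipate any real obstacle: the potential subtlety is only notational (keeping straight which slot is being varied in each partial map and what the corresponding adjoint identity looks like), and the analytic content reduces to the standard fact that an element of a Hilbert space is determined by its pairing against every other element.
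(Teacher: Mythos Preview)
Your proposal is correct and follows essentially the same approach as the paper: both use the defining adjoint identity together with bilinearity of $K$ to reduce linearity in the first slot to an inner-product equality valid for all test vectors, then conclude via Corollary~\ref{col:zero-property} (or equivalently the definiteness argument you mention). The only cosmetic difference is that you verify full $\mathbb{R}$-linearity (with scalars $\alpha,\beta$) in one stroke, whereas the paper writes out only the additivity step and leaves homogeneity implicit.
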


\begin{proof}
We now show the proof for the bilinearity of $K^\transp$ only. The proof for the bilinearity of $K^\wup$ is completely analogous.\newline
We already know that $K^\transp(\theta,y)$ is linear in $y$, since it is defined to be the adjoint operator of $K(\theta,\cdot)$, which is always a linear operator itself. It remains to show that $K^\transp(\theta,y)$ is also linear in $\theta$:
\begin{equation}\begin{aligned}
\langle x, K^\transp(\theta+\psi,y) \rangle_\mathcal{X} &= \langle K(\theta+\psi,x),y \rangle_\mathcal{Y} \\
&= \langle K(\theta,x),y \rangle_\mathcal{Y} + \langle K(\psi,x),y \rangle_\mathcal{Y} \\
&= \langle x,K^\transp(\theta,y) \rangle_\mathcal{X} + \langle x,K^\transp(\psi,y) \rangle_\mathcal{X}
\end{aligned}\end{equation}
It follows that
\begin{equation}\begin{aligned}
\langle x, K^\transp(\theta+\psi,y) - \left[ K^\transp(\theta,y) + K^\transp(\psi,y) \right] \rangle_\mathcal{X} = 0 
\end{aligned}\end{equation}
for all $x,y,\theta,\psi$, which, according to Corollary \ref{col:zero-property}, shows that
\begin{equation}\begin{aligned}
K^\transp(\theta+\psi,y) = K^\transp(\theta,y) + K^\transp(\psi,y),
\end{aligned}\end{equation}
meaning that $K^\transp$ is linear in the first argument and thus a bilinear operator.
\end{proof}

\adjointTheorem*
\begin{proof}
Due to the continuity in all arguments, which are elements of Hilbert spaces, the adjoints exist. Let $K^\otimes$ be the adjoint of $K^\transp(\theta,y)$ in $\theta$ and let $K^\oplus$ be the adjoint of $K^\wup(x,y)$ in $x$. Then
\begin{equation}\begin{aligned}
	\langle K(\theta,x),y\rangle_\Y &= \langle \theta, K^\wup(x,y) \rangle_\P = \langle K^\oplus(\theta,y),x\rangle_\X\\
	&= \langle x, K^\transp(\theta,y) \rangle_\X = \langle K^\otimes(x,y),\theta\rangle_\P\\
\end{aligned}\end{equation}
for all $(\theta,x,y) \in \P\times\X\times\Y$.
From Corollary \ref{col:zero-property}, we can infer that $K^\transp=K^\oplus$ and $K^\wup=K^\otimes$.
\end{proof}

\begin{remark}
Theorem \ref{thm:adjoint-commutativity} immediately generalizes to multilinear operators. For Hilbert spaces $\X_1,\dots,\X_{n+1}$, which we uniquely identify by their indices\footnote{This becomes important when $\X_i=\X_j$ for some $(i,j)$.}, let
$$K: \X_1 \times \cdots \times \X_{n} \rightarrow \X_{n+1}$$ be a multilinear, continuous operator and let $$K^{\X_i}: \X_1 \times \cdots \times \X_{i-1} \times \X_{i+1} \times \cdots \times \X_{n+1} \rightarrow \X_i$$ be the adjoint of $K$ in its $i$-th argument, which is a multilinear operator. If $K^{\X_i}$ is continuous, then $$\left(K^{\X_i}\right)^{\X_j}=K^{\X_j}$$ for all $j$. Thus, if $K^{\X_i}$ is continuous for all $i$, this means that only the final argument with respect to which one takes the adjoint determines the resulting operator.
\end{remark}

\symmetricOpsSelfAdjoint*
\begin{proof}
First of, since $M(x,\ph)=M(\ph,x)$ for all $x$, we have $$M^\transp(x,\ph)=(M(x,\ph))^\ast=(M(\ph,x))^\ast=M^\wup(\ph,x)$$ for all $x\in\X$. It follows that $M^\wup=M^\transp$, which is thus also a symmetric bilinear operator.
With this we have \begin{equation}
    \begin{aligned}
        \langle M(x,y),z \rangle_\X &=\langle y,M^\transp (x,z)\rangle_\X = \langle y,M^\transp (z,x) \rangle_\X\\
        &=\langle M(z,y),x \rangle_\X=\langle z,M^\transp(x,y) \rangle_\X,
    \end{aligned}
\end{equation} which implies that $M=M^\transp=M^\wup$ with Corollary \ref{col:zero-property}.
\end{proof}

\multiplicationOpSelfAdjoint*
\begin{proof}
For all $x,y\in\X$,
 $$\langle Ax,y \rangle_\X = \langle M(a,x),y \rangle_\X = \langle x,M(a,y) \rangle_\X$$ according to Lemma \ref{lem:symmetric_ops_self_adjoint}. Furthermore, $\langle Ax,y \rangle_\X = \langle x,A^\ast y \rangle_\X$, so that $\langle x,A^\ast y \rangle_\X=\langle x,M(a,y) \rangle_\X$ for all $x,y \in \X$, which means that $A^\ast=M(a,\ph)=A$ due to Corollary \ref{col:zero-property}.
\end{proof}

\section{Initial Values $\eta_L$}\label{sec:appdx_initial_etaL}
In section \ref{sec:penalty_terms}, a generalization of the different penalty terms to the form $$\R:=p\left( \left(\D{x_0}{x_L} \right)^\ast \cdot v \right)$$ was introduced. The initial value $\eta_L$, which is needed in order to initialize Algorithm \ref{alg:weight_gradients}, depends on the out layer's activation function $g_L$ and $v=\xi_L$. Typical special cases for the activation function include $g_L=\text{softmax}$ (for classification problems) or $g_L=\text{id}$ (non-categorical targets like in regression or if one wants to apply penalties to derivatives of logits). Because $\text{softmax}$ is not a coordinate-wise activation function, we cannot harness equation \eqref{eq:D-zeta-theta}. \\
For $v$, we can identify two particular special cases: Those where $v$ is independent of the network and $v=\nabla_{x_L}\L=-y\oslash x_L$ for classical double backpropagation.\\
For these reasons, $\eta_L$ needs to be calculated explicitly for the cases above.

\subsection{Softmax}
Here, we derive $\eta_L$ for penalty terms of the form \eqref{eq:penalty_general_form}, where $v$ is independent of the neural network's input and $g_L=\text{softmax}$. A standard calculation shows that
$$\D{z_L}{x_L}\simeq \text{diag}(x_L) - x_L x_L^T$$
is self-adjoint (symmetric). We hence have
\begin{equation}
    \begin{aligned}
    \zeta_L=& \left( \D{z_L}{x_L} \right)^\ast \cdot v\\
    =& \hphantom{.} x_L \odot v - \langle x_L, v \rangle \cdot x_L
    \end{aligned}
\end{equation}
and particular, for $v=e^\ith$ and $x_L=(x_L^1,\dots,x_L^C)^T$, this can be written as
\begin{equation}
    \begin{aligned}
    \left( \D{z_L}{x_L} \right)^\ast \cdot e^\ith &= x^i_L e^\ith - x^i_L \cdot x_L \\
    &= x_L^i \cdot (e^\ith - x_L).
    \end{aligned}
\end{equation}
Since $v$ does not depend on $x_L$, we can treat $v$ as a constant, which yields $$\D{x_L}{\zeta_L} \simeq \text{diag}(v) - \langle x_L,v  \rangle\cdot I - v x_L^T$$
(with identity matrix $I$), leading to
\begin{equation*}
    \begin{aligned}
    \eta_L &= \left( \D{z_L}{x_L} \right)^\ast\left( \D{x_L}{\zeta_L} \right)^\ast \cdot \hh_L\\
    &= \left( \D{z_L}{x_L} \right)^\ast \left(\text{diag}(v) - \langle x_L,v  \rangle\cdot I - v x_L^T \right)^T \cdot \hh_L \\
    &= \left( \D{z_L}{x_L} \right)^\ast \left(v \odot x_L - \langle x_L,v  \rangle \cdot \hh_L - \langle x_L, \hh_L \rangle \cdot x_L  \right) \\
    &= x_L \odot v \odot h_L - \langle  x_L,v\rangle(x_L\odot v) - \langle x_L, v\odot h_L \rangle  \\
    &\hphantom{=} + 2 \langle x_L,v \rangle \langle x_L,h_L\rangle x_L.
    \end{aligned}
\end{equation*}

\subsection{Softmax + Non-Negative Log-Likelihood Loss}
The following deals with classical double backpropagation, where a penalty of the form $$p(\nabla_{x_0} \ell (x_L,y))$$ is applied, with $\ell$ denoting the non-negative log likelihood loss function as defined in equation \eqref{eq:NLL}. As detailed in section \ref{sec:classical_dbp}, this penalty term can be written in the general form \eqref{eq:penalty_general_form} with $g_L=\text{softmax}$ and $v=\nabla_{x_L}\ell(x_L,y)= -y\oslash x_L$.
\begin{equation}
    \begin{aligned}
    \left( \D{z_L}{x_L} \right)^\ast \cdot \nabla_{x_L}\L &= \left( \text{diag}(x_L) - x_L x_L^T\right) \cdot \left(-y \oslash x_L \right)\\
    &= -y + \left(\sum_{i=1}^C (x_L^iy^i)/x_l^i \right) x_L \\
    &= x_L - y,
    \end{aligned}
\end{equation}
where we used that $\sum_i y^i = 1$, where $y=(y^1,\dots,y^C)^T$. 
And for classic double backpropagation:
\begin{equation}
    \begin{aligned}
    \eta_L &= \left( \D{z_L}{x_L} \right)^\ast\cdot \left( \D{x_L}{\zeta_L} \right)^\ast \cdot \hh_L\\
    &= \left( \text{diag}(x_L) - x_L x_L^T \right)\cdot \left(\text{id} \right)^\ast \cdot \hh_L \\
    &=  x_L \odot \hh_L - x_L \langle x_L,\hh_L \rangle
    \end{aligned}
\end{equation}

\subsection{Identity Function}
If one applies a penalty to the derivatives of the logit-layers ($z_L$), one can still represent this case as in equation \eqref{eq:penalty_general_form} by modelling $g_L$ as an identity map, so that $$\D{z_L}{x_L}=\text{id}$$ and $\zeta_L = \xi_L = v$. Since $\zeta_L$ is constant in $z_L$, we have $\eta_L = \left( \D{z_L}{\zeta_L} \right)^\ast \cdot \hh_L = 0$.

\section{Optimized Algorithm for Jacobian Penalties}\label{sec:appdx_algorithm}
\begin{algorithm}[H]
  \caption{For a neural network with locally linear activation functions (e.g. ReLU) and softmax output, the weight gradients of the penalty term $\R=\sum_{i=1}^C\R^\ith=\sum_{i=1}^C \|x^i_L\|_{\X_0}^2$ can be calculated using an algorithm with only $2CL+2L-1$ linear operations, compared to the na\"{i}ve implementation with $3CL+L-C$. Additionally, the memory requirements in $\mathcal{O}(1)$ in $C$.} \label{alg:frobenius_jacobian_optimized}
\begin{algorithmic}
   \STATE \# forward pass
   \STATE {\bfseries Initialize} $x_0$
   \FOR{$j\gets 1$ {\bfseries to}  $L$}
   \STATE $z_j = K_j(\thetaj,x_{j-1}) + b_j \hspace{2mm}$
   \STATE $x_j = g_j(\zj)$
   \STATE $a_j = g'_j(z_j)$, $\Delete{z_j}$
   \ENDFOR

   \STATE {\bfseries Initialize} $\hat{\theta}^1_j = 0 $, $\hat{\eta}_j=0$
   \FOR{$i\gets 1$ {\bfseries to}  $C$}

      \STATE \# $i$-th backward pass
      \STATE {\bfseries Initialize} $\xi^{\langle i \rangle}_L = e^\ith$
      \FOR{$j\gets L$ {\bfseries to}  $1$}
      \STATE $\zeta_j^{\langle i \rangle} = M_j (a_j, \xi^{\langle i \rangle}_j) $, $\Delete{\xi^{\langle i \rangle}_j}$
      \STATE $\xi^{\langle i \rangle}_{j-1} = K_j^T(\theta_j,\zeta^{\langle i \rangle})$
      \ENDFOR

      \STATE \# $i$-th backward-backward pass
      \STATE {\bfseries Initialize} $\gvar_0^\ith = \nabla_{\xi_0^\ith} \|\xi_0^\ith\|^2_{\X_0} = 2\xi_0^\ith$
      \FOR{$j\gets 1$ {\bfseries to}  $L$}
      \STATE $\hat{\theta}^1_j \leftarrow  \hat{\theta}_j + K^\wup_j(\gvar_{j-1}^\ith,\zeta_{j}^\ith)$, $\Delete{\zeta_j^\ith}$
      \STATE $\hh_j^\ith=K_j(\theta_j,\gvar^\ith_{j-1})$, $\Delete{\gvar^\ith_{j-1}}$
      \STATE $\gvar_j^\ith = M_j (a_j, \hh_j^\ith) $
      \STATE \textbf{if} $j\neq L$ \textbf{then} $\Delete{\hh_j^\ith}$ \textbf{end if}
      
      \ENDFOR
      
      \STATE $\hat{\eta}_L \leftarrow \hat{\eta}_L + x_L \odot e^\ith \odot h^\ith_L - \langle  x_L,e^\ith\rangle(x_L\odot e^\ith)$ 
      \STATE $\hat{\eta}_L \leftarrow \hat{\eta}_L + 2 \langle x_L,e^\ith \rangle \langle x_L,h^\ith_L\rangle x_L - \langle x_L, e^\ith\odot h^\ith_L \rangle$
      \STATE $\Delete{h^\ith_L, x_L}$

   \ENDFOR

      \STATE \# cumulated forward-backward passes
      \FOR{$j\gets L$ {\bfseries to}  $1$}
      \STATE $\nabla_{\theta_j}\R = K^\wup_j(\gvar_{j-1},    \zetaj) + K^\wup_j\left(\eta_j,x_{j-1}\right)$
      \STATE $\nabla_{b_j}\R = \eta_j$
      
      \IF{$j>1$}
      \STATE $\hat{\gamma}_{j-1} = K_j^T(\theta_j, \hat{\eta}_j)$
      \STATE $\Delete{\hat{\eta}_{j}}$
      \STATE $\hat{\eta}_{j-1} = M_{j-1}(a_{j-1},\hat{\gamma}_{j-1})$
      \STATE $\Delete{a_{j-1},\hat{\gamma}_{j-1}}$
      \ENDIF
      \ENDFOR
\end{algorithmic}
\end{algorithm}

\end{document}